\newcommand{\term}[1]{\textbf{#1}}
\newcommand{\figref}[1]{Figure~\ref{#1}}
\newcommand{\secref}[1]{Section~\ref{#1}}
\newcommand{\thmref}[1]{Theorem~\ref{#1}}
\newcommand{\lemref}[1]{Lemma~\ref{#1}}
\newcommand{\algref}[1]{Alg.~\ref{#1}}
\newcommand{\appendref}[1]{Appendix~\ref{#1}}
\newcommand{\argmin}{\operatornamewithlimits{argmin}}
\newcommand{\given}{\,|\,}
\newcommand{\Ex}[1]{\mathbf{E} \left[ #1 \right] }
\newcommand{\set}[1]{\left\{#1\right\}}
\newcommand{\union}{\cup}
\newcommand{\size}[1]{\left| #1 \right|}
\newcommand{\BigO}[1]{O\hspace{-1pt}\left( #1 \right)}
\newcommand{\clusters}{\mathcal{C}}
\newcommand{\gclusters}{\hat\clusters}
\newcommand{\newclusters}{\tilde\clusters}
\newcommand{\bregd}[2]{\left\|#1-#2\right\|}
\newcommand{\CFL}{C^{\text{FL}}}
\newcommand{\muFL}{\mu^{\text{FL}}}
\newtheorem{thm}{Theorem}[section]
\newtheorem{lem}[thm]{Lemma}
\newtheorem{prop}[thm]{Proposition}
\newtheorem{claim}{Claim}
\title{Optimistic Concurrency Control\\ for Distributed Unsupervised Learning}
\author{
  Xinghao Pan \\
  \texttt{xinghao@eecs.berkeley.edu} \\
  \and
  Joseph E. Gonzalez \\
  \texttt{jegonzal@eecs.berkeley.edu} \\
  \and
  Stefanie Jegelka \\
  \texttt{stefje@eecs.berkeley.edu} \\
  \and
  Tamara Broderick \\
  \texttt{tab@stat.berkeley.edu} \\
  \and
  Michael I. Jordan \\
  \texttt{jordan@cs.berkeley.edu}
  % UC Berkeley
}
\date{}
\begin{document}

\maketitle

\begin{abstract}
\label{sec:abstract}
Research on distributed machine learning algorithms has focused primarily
on one of two extremes---algorithms that obey strict concurrency constraints
or algorithms that obey few or no such constraints.  We consider an intermediate
alternative in which algorithms optimistically assume that conflicts are
unlikely and if conflicts do arise a conflict-resolution protocol is invoked.
We view this ``optimistic concurrency control'' paradigm as particularly
appropriate for large-scale machine learning algorithms, particularly in
the unsupervised setting.  We demonstrate our approach in three problem
areas: clustering, feature learning and online facility location.  We evaluate 
our methods via large-scale experiments in a cluster computing environment.
\end{abstract}

\label{sec:intro}

The desire to apply machine learning to increasingly larger datasets
has pushed the machine learning community to address the challenges of
distributed algorithm design: partitioning and coordinating computation
across the processing resources.  In many cases, when computing
statistics of iid data or transforming features, the computation
factors according to the data and coordination is only required during
aggregation.  For these \emph{embarrassingly parallel} tasks, the
machine learning community has embraced the map-reduce paradigm, which
provides a template for constructing distributed algorithms that are
fault tolerant, scalable, and \emph{easy to study}.

However, in pursuit of richer models, we often introduce statistical
dependencies that require more sophisticated algorithms (e.g.,
collapsed Gibbs sampling or coordinate ascent) which were developed
and studied in the \emph{serial} setting.  Because these algorithms
iteratively transform a global state, parallelization can be
challenging and often requires frequent and complex coordination.

Recent efforts to distribute these algorithms can be divided into two
primary approaches.  The \term{mutual exclusion} approach, adopted by
\cite{Gonzalez11} and \cite{Low12}, guarantees a \emph{serializable} execution
preserving the theoretical properties of the serial algorithm but at
the expense of parallelism and costly locking overhead.
Alternatively, in the \term{coordination-free} approach, proposed by
\cite{Recht11} and \cite{Ahmed12}, processors communicate frequently without
coordination minimizing the cost of contention but leading to
stochasticity, data-corruption, and requiring potentially complex
analysis to prove algorithm correctness.

In this paper we explore a third approach, \term{optimistic
  concurrency control} (OCC) \cite{kung1981:occ} which offers the
performance gains of the coordination-free approach while at the same
time ensuring a serializable execution and preserving the theoretical
properties of the serial algorithm.  Like the coordination-free
approach, OCC exploits the infrequency of data-corrupting operations.
However, instead of allowing occasional data-corruption, OCC detects
data-corrupting operations and applies correcting computation.  As a
consequence, OCC automatically ensures correctness, and the analysis
is only necessary to guarantee optimal scaling performance.

We apply OCC to distributed nonparametric unsupervised
learning---including but not limited to clustering---and implement
distributed versions of the DP-Means \cite{kulis2011}, BP-Means
\cite{broderick13}, and online facility location (OFL) algorithms.  We
demonstrate how to analyze OCC in the context of the DP-Means
algorithm and evaluate the empirical scalability of the OCC approach
on all three of the proposed algorithms.  The primary contributions of
this paper are:
\begin{enumerate}
  \setlength{\itemsep}{0pt}
\item Concurrency control approach to distributing unsupervised learning
  algorithms.
\item Reinterpretation of online nonparametric clustering in the form
  of facility location with approximation guarantees.
\item Analysis of optimistic concurrency control for unsupervised learning.
\item Application to feature modeling and clustering.
\end{enumerate}

\section{Optimistic Concurrency Control}
\label{sec:background}

Many machine learning algorithms iteratively transform some global
state (e.g., model parameters or variable assignment) giving the
illusion of serial dependencies between each operation.  However, due
to sparsity, exchangeability, and other symmetries, it is
often the case that many, \emph{but not all}, of the
state-transforming operations can be computed concurrently while still
preserving \term{serializability}: the equivalence to some serial
execution where individual operations have been reordered.

This opportunity for serializable concurrency forms the foundation of
distributed database systems.  For example, two customers may
concurrently make purchases exhausting the inventory of unrelated
products, but if they try to purchase the same product then we may
need to serialize their purchases to ensure sufficient inventory.  One
solution (\emph{mutual exclusion}) associates locks with each product
type
and forces each purchase of the same product to be processed serially.
This might work for an unpopular, rare product but if we are
interested in selling a popular product for which we have a large
inventory the serialization overhead could lead to unnecessarily slow
response times.  To address this problem, the database community has
adopted \term{optimistic concurrency control} (OCC)
\cite{kung1981:occ} in which the system tries to satisfy the customers
requests without locking and corrects transactions that could lead to
negative inventory (e.g., by forcing the customer to checkout again).

Optimistic concurrency control exploits situations where most
operations can execute concurrently without conflicting or violating
serial invariants in our program.  For example, given sufficient
inventory the order in which customers are satisfied is immaterial and
concurrent operations can be executed serially to yield the same final
result.  However, in the rare event that inventory is nearly depleted
two concurrent purchases may not be serializable since the inventory
can never be negative.  By shifting the cost of concurrency control to
rare events we can admit more costly concurrency control mechanisms
(e.g., re-computation) in exchange for an efficient, simple,
coordination-free execution for the majority of the events.

Formally, to apply OCC we must define a set of \term{transactions}
(i.e., operations or collections of operations), a mechanism to detect
when a transaction violates serialization invariants (i.e., cannot be
executed concurrently), and a method to correct (e.g., rollback)
transactions that violate the serialization invariants.  Optimistic
concurrency control is most effective when the cost of validating
concurrent transactions is small and conflicts occur infrequently.

Machine learning algorithms are ideal for optimistic concurrency
control.  The conditional independence structure and sparsity in our
models and data often leads to sparse parameter updates substantially 
reducing the chance of conflicts.  Similarly, symmetry in our models 
often provides the flexibility to reorder serial operations while 
preserving algorithm invariants.
Because the dependency structure is encoded in the model we can easily
detect when an operation violates serial invariants and correct by
rejecting the change and rerunning the computation. Alternatively, we
can exploit the semantics of the operations to resolve the conflict by
accepting a modified update. As a consequence OCC allows us to easily
construct provably correct and efficient distributed algorithms without
the need to develop new theoretical tools to analyze chaotic
convergence or non-deterministic distributed behavior.

\subsection{The OCC Pattern for Machine Learning}

Optimistic concurrency control can be distilled to a simple pattern
(meta-algorithm) for the design and implementation of distributed machine
learning systems.  We begin by evenly partitioning $N$ data points (and the
corresponding computation) across the $P$ available processors.  Each
processor maintains a replicated view of the global state and
\emph{serially} applies the learning algorithm as a sequence of
operations on its assigned data and \emph{the global state}.  If an
operation mutates the global state in a way that preserves the
serialization invariants then the operation is accepted locally and
its effect on the global state, if any, is eventually replicated to
other processors.

However, if an operation could potentially conflict with operations on
other processors then it is sent to a unique serializing processor
where it is rejected or corrected and the resulting global state
change is eventually replicated to the rest of the processors.
Meanwhile the originating processor either tentatively accepts the
state change (if a rollback operator is defined) or proceeds as though
the operation has been deferred to some point in the future.

While it is possible to execute this pattern asynchronously with
minimal coordination, for simplicity we adopt the bulk-synchronous
model of \cite{Valiant90} and divide the computation into
\emph{epochs}.  Within an epoch $t$, $b$ data points $\mathcal{B}(p,t)$ are evenly assigned
to each of the $P$ processors.
Any state changes or serialization operations are transmitted at the
end of the epoch and processed before the next epoch.
While potentially slower than an asynchronous execution, the
bulk-synchronous execution is deterministic and can be easily
expressed using existing systems like Hadoop or Spark \cite{zaharia2010:spark}.

\section{OCC for Unsupervised Learning}

Much of the existing literature on distributed machine learning
algorithms has focused on classification and regression problems,
where the underlying model is continuous.  In this paper we 
apply the OCC pattern to machine learning problems that have a more
discrete, combinatorial flavor---in particular unsupervised clustering
and latent feature learning problems. These problems exhibit symmetry
via their invariance to both data permutation and cluster or feature
permutation.  Together with the sparsity of interacting operations in
their existing serial algorithms, these problems offer a unique
opportunity to develop OCC algorithms.

The K-means algorithm provides a paradigm example; here the
inferential goal is to partition the data.
Rather than focusing solely on K-means, however, we have been inspired
by recent work in which a general family of K-means-like algorithms
have been obtained by taking Bayesian nonparametric (BNP) models based
on combinatorial stochastic processes such as the Dirichlet process,
the beta process, and hierarchical versions of these processes, and
subjecting them to \emph{small-variance asymptotics} where the
posterior probability under the BNP model is transformed into a cost
function that can be optimized~\cite{broderick13}.  The algorithms
considered to date in this literature have been developed and analyzed 
in the serial setting; our goal is to explore distributed algorithms for
optimizing these cost functions that preserve the structure and
analysis of their serial counterparts.

\subsection{OCC DP-Means}
\label{sec:dpmeans}

\begin{figure}[p]
  \footnotesize
  \centering
  \begin{multicols}{2}
    \begin{minipage}{0.51\textwidth}
      \begin{algorithm}[H]
        \DontPrintSemicolon
        \caption{Serial DP-means}
        \label{alg:dpm}
        \Input{data $\{x_i\}_{i=1}^N$, threshold $\lambda$}
        $\clusters \leftarrow \emptyset$ \;
        \While{not converged}{
          \For{$i$  = 1 to $N$}{
            $\mu^* \leftarrow \argmin_{\mu \in \clusters} \bregd{x_i}{\mu}$\;
            \If{$\bregd{x_i}{\mu^*} > \lambda$}{
              $z_i \leftarrow x_i $ \;
              $\clusters \leftarrow \clusters \union x_i$ \tcp*{New cluster} 
            }
            \lElse{  
              $z_i \leftarrow \mu^*$ \tcp*{Use nearest}
            }
          }    
          \For(\tcp*[h]{Recompute Centers}){$\mu \in \clusters$}{
            $\mu \leftarrow $ \Mean{$\set{x_i \given z_i = \mu}$ } \;
          }
        }
        \Output{Accepted cluster centers $\clusters$}
      \end{algorithm}
    \end{minipage}

 \begin{minipage}{0.49\textwidth}
      \begin{algorithm}[H]
        \DontPrintSemicolon
        \caption{\texttt{DPValidate} }
        \label{alg:dp_validate}
        \Input{Set of proposed cluster centers $\gclusters$}
        $\clusters \leftarrow \emptyset$ \;
        \For{$x \in \gclusters$}{
          $\mu^* \leftarrow \argmin_{\mu \in \clusters} \bregd{x}{\mu}$\;
          \If(\tcp*[h]{Reject}){$\bregd{x_i}{\mu^*} < \lambda$}{
           \Ref{$x$} $\leftarrow \mu^*$ \tcp*{Rollback Assgs} 
          }\lElse{
            $\clusters \leftarrow \clusters \union x$ \tcp*{Accept}
          }
        }
        \Output{Accepted cluster centers $\clusters$}
      \end{algorithm}
    \end{minipage}
  \end{multicols}

\begin{minipage}{1.0\textwidth}
      \begin{algorithm}[H]
        \DontPrintSemicolon
        \caption{Parallel DP-means}
        \label{alg:dpdist}
        \Input{data $\{x_i\}_{i=1}^N$, threshold $\lambda$}
        \Input{Epoch size $b$ and $P$ processors} 
        \Input{Partitioning $\mathcal{B}(p,t)$ of data 
          $\set{x_i}_{i \in \mathcal{B}(p,t)}$  to processor-epochs 
          where $b = \size{\mathcal{B}(p,t)}$}
        $\clusters \leftarrow \emptyset $ \;
        \While{not converged}{
          \For{epoch $t$ = 1 to $N/(Pb)$}{
            $\gclusters \leftarrow \emptyset$ \tcp*{New candidate centers}
            \ParForAll{$p \in \set{1, \ldots, P}$}{
              \tcp{Process local data}
              \For{$i \in \mathcal{B}(p,t)$} {
                $\mu^* \leftarrow \argmin_{\mu \in \clusters} \bregd{x_i}{\mu}$\;
                \tcp{Optimistic Transaction}
                \If{$ \bregd{x_i}{\mu^*} > \lambda$}{
                  $z_i \leftarrow$ \Ref{$x_i$}  \;
                  $\gclusters \leftarrow \gclusters \union x_i$ 
                }\lElse{  
                  $z_i \leftarrow \mu^*$ \tcp*{Always Safe}
                }
              }
            }
            \tcp{Serially validate clusters}
            $\clusters \leftarrow \clusters  \,\, \union \,\, \DPValidate(\gclusters)$ \;
          }
          \For(\tcp*[h]{Recompute Centers}){$\mu \in \clusters$}{
            $\mu \leftarrow $ \Mean{$\set{x_i \given z_i = \mu}$ } \;
          }
        }
        \Output{Accepted cluster centers $\clusters$}
      \end{algorithm}    
    \end{minipage}  

  \caption{ 
The Serial DP-Means algorithm and distributed implementation using the
OCC pattern. 
}
  \label{fig:dpmeans}
\end{figure}

We first consider the \emph{DP-means} algorithm (\algref{alg:dpm})
introduced by~\cite{kulis2011}.
Like the K-means algorithm, DP-Means alternates between updating the
cluster assignment $z_i$ for each point $x_i$ and recomputing the
centroids $\clusters = \set{\mu_k}_{k=1}^K$ associated with each
clusters.  However, DP-Means differs in that the number of clusters is
not fixed a priori.  Instead, if the distance from a given data point
to all existing cluster centroids is greater than a parameter
$\lambda$, then a new cluster is created.  While the second phase is
trivially parallel, the process of introducing clusters in the first
phase is inherently serial.  However, clusters tend to be introduced
infrequently, and thus DP-Means provides an opportunity for OCC.

In \algref{alg:dpdist} we present an OCC parallelization of the
DP-Means algorithm in which each iteration of the serial DP-Means
algorithm is divided into $N/(Pb)$ bulk-synchronous epochs.  The data 
is evenly partitioned $\set{x_i}_{i \in \mathcal{B}(p,t)}$ across
processor-epochs into blocks of size $b = \size{\mathcal{B}(p,t)}$.
During each epoch $t$, each processor $p$ evaluates the cluster
membership of its assigned data $\set{x_i}_{i \in \mathcal{B}(p,t)}$
using the cluster centers $\clusters$ from the previous epoch and
optimistically proposes a new set of cluster centers $\gclusters$.
At the end of each epoch the proposed cluster centers, $\gclusters$,
are \emph{serially} validated using \algref{alg:dp_validate}.  The
validation process accepts cluster centers that are not covered by 
(i.e., not within $\lambda$ of)
already accepted cluster centers.  When a cluster center is rejected
we update its reference to point to the already accepted center,
thereby correcting the original point assignment.

\subsection{OCC Facility Location}
\label{sec:ofl}

The DP-Means objective turns out to be equivalent to the classic Facility
Location (FL) objective:
$$  J(\clusters) = \sum_{x \in X}\min_{\mu \in \clusters}\bregd{x}{\mu}^2 + \lambda^2 |\clusters|,
$$
which selects the set of cluster centers (facilities) $\mu \in \clusters$
that minimizes the shortest distance $\bregd{x}{\mu}$ to each point
(customer) $x$ as well as the penalized cost of the clusters $\lambda^2
\size{\clusters}$.  However, while DP-Means allows the clusters to be
arbitrary points (e.g., $\clusters \in \mathbb{R}^D$), FL constrains
the clusters to be points $\clusters \subseteq \mathcal{F}$ in a set
of candidate locations $\mathcal{F}$.
Hence, we obtain a link between combinatorial Bayesian models and FL
allowing us to apply algorithms with known approximation bounds to
Bayesian inspired nonparametric models.  As we will see in
\secref{sec:theory}, our OCC algorithm provides constant-factor
approximations for both FL and DP-means. 

Facility location has been studied intensely. We build on the
\emph{online} facility location (OFL) algorithm described by
Meyerson~\cite{Meyerson01}.
The OFL algorithm processes each data point $x$ serially
in a single pass by either adding $x$ to the set of clusters with
probability $\min(1,\min_{\mu \in \clusters}\bregd{x}{\mu}^2/\lambda^2)$ or
assigning $x$ to the nearest existing cluster.  Using OCC we are able
to construct a distributed OFL algorithm (\algref{alg:occ_ofl}) which is
nearly identical to the OCC DP-Means algorithm (\algref{alg:dpdist})
but which provides strong approximation bounds.  The OCC OFL
algorithm differs only in that clusters are introduced and validated
stochastically---the validation process ensures that the new clusters 
are accepted with probability equal to the serial algorithm.

\begin{figure} %[p]
  \footnotesize
  \centering
  \begin{multicols}{2}
    \begin{minipage}[t]{0.49\textwidth}
      \begin{algorithm}[H]
        \DontPrintSemicolon
        \caption{Parallel OFL}
        \label{alg:occ_ofl}
        \Input{Same as DP-Means}
        \For($\gclusters \leftarrow \emptyset$){epoch $t$ = 1 to $N/(Pb)$}{
          \ParForAll{$p \in \set{1, \ldots, P}$}{
            \For{$i \in \mathcal{B}(p,t)$} {
              $d \leftarrow \min_{\mu \in \clusters} \bregd{x_i}{\mu}$ \;
              \DoWithProbability{$\min\set{d^2,\lambda^2}/\lambda^2$}{
                $\gclusters \leftarrow \gclusters \union (x_i,d)$\;
              }
            }
          }
          $\clusters \leftarrow \clusters  \,\, \union \,\, \OFLValidate(\gclusters)$ \;
        }
        \Output{Accepted cluster centers $\clusters$}
      \end{algorithm}
    \end{minipage}
    \begin{minipage}[t]{0.49\textwidth}
      \begin{algorithm}[H]
        \DontPrintSemicolon
        \caption{OFLValidate}
        \label{alg:ofl_validate}
        \Input{Set of proposed cluster centers $\gclusters$}
        $\clusters \leftarrow \emptyset$ \;
        \For{$(x,d) \in \gclusters$}{
          $d^* \leftarrow \min_{\mu \in \clusters} \bregd{x}{\mu}$\;
          \DoWithProbability{  $\min\set{d^{*2}, d^2}/d^2$ }{
            $\clusters \leftarrow \clusters \union x$ \tcp*{Accept}
          }
        }
        \Output{Accepted cluster centers $\clusters$} 
      \end{algorithm}
    \end{minipage}
  \end{multicols}
  \caption{The OCC algorithm for Online Facility Location (OFL).}
\end{figure}

\subsection{OCC BP-Means}
\label{ssec:bpmeans}

\emph{BP-means} is an algorithm for learning collections of latent binary
\emph{features}, providing a way to define groupings of data points that 
need not be mutually exclusive or exhaustive like clusters.

As with serial DP-means,
there are two phases in serial BP-means (\algref{alg:bpm}).
In the first phase, each data point $x_i$
is labeled with binary assignments from a collection of features
($z_{ik} = 0$ if $x_i$ doesn't belong to feature $k$; otherwise $z_{ik} = 1$)
to construct a representation $x_i\approx\sum_kz_{ik}f_k$.
In the second phase, parameter values (the feature means $f_{k} \in \gclusters$)
are updated based on the assignments.
The first step also includes the possibility of introducing an
additional feature. While the second phase is trivially parallel, 
the inherently serial nature of the first phase combined with the infrequent
introduction of new features points to the usefulness of OCC
in this domain.

The OCC parallelization for BP-means follows the same basic structure as OCC DP-means.
Each transaction operates on a data point $x_i$ in two phases.
In the first, analysis phase, the optimal representation $\sum_k z_{ik}f_k$ is found.
If $x_i$ is not well represented (i.e., $\|x_i-\sum_k z_{ik}f_k\|>\lambda$), the difference is proposed as a new feature in the second validation phase.
At the end of epoch $t$, the proposed features $\set{f^{new}_i}$ are serially validated to obtain a set of accepted features $\newclusters$.
For each proposed feature $f_i^{new}$, the validation process first finds the optimal representation $f^{new}_i\approx\sum_{f_k\in\newclusters}z_{ik}f_k$ using \textit{newly accepted features}.
If $f_i^{new}$ is not well represented, the difference $f_i^{new}-\sum_{f_k\in\newclusters}z_{ik}f_k$ is added to $\newclusters$ and accepted as a new feature.

\begin{comment}
To apply OCC to BP-means (\algref{alg:bp_all}),
we again consider the $t$th epoch of $N/(Pb)$ epochs.
Each of the $P$ processors performs $b$ transactions,
one for each of the $b$ data points in the current batch.
At the start of an epoch, the processor obtains the
global features $\gclusters^{(t-1)}$.
In the first, analysis phase of the transaction for $x_i$, 
the distance from $x_i$ to the feature representation
$\sum_{k} z_{ik} f_{k}$ is calculated,
across all $f_{k} \in \gclusters^{(t-1)}$. 
The minimum such distance across 
choices of feature assignments $z_i$
is found.
In the validation phase, if the minimum distance
is larger than a parameter $\lambda$,
the difference between $x_i$ and the optimal feature representation
is sent to the serializing processor as a proposed new feature $f^{new}_i$.
Otherwise, $z_i$ is chosen to be the optimal
assignment, and validation is completed locally.

After the end of epoch $t$, the central processor
serially validates the proposed new features $\gclusters'$
from the local processors.
The new global set of features includes the previous
set of features: $\gclusters^{(t)} \leftarrow \gclusters^{(t-1)}$.
For each proposed feature $f^{new}_i$, its feature representation 
$\sum_{f_k\in \gclusters^{(t)}\backslash\gclusters^{(t-1)}}z_{ik}f_k$ is found.
If $f_i^{new}-\sum_{f_k\in\gclusters^{(t)}\backslash\gclusters^{(t-1)}}z_{ik}f_k$ is larger
than $\lambda$ for the minimizing $z_{ik}$ assignments,
we add this difference to $\gclusters^{(t)}$.
\end{comment}

Finally, to update the feature means, let $F$ be the $K$-row
matrix of feature means. The feature means update
$F\leftarrow (Z^TZ)^{-1}Z^TX$ can be evaluated
as a single
transaction by computing the sums
$Z^TZ=\sum_i z_i z_i^T$ (where $z_i$
is a $K \times 1$ column vector so $z_i z_i^T$ is a $K \times K$ matrix)
and $Z^TX=\sum_i z_i x_i^T$ in parallel.

We present the pseudocode for the OCC parallelization of BP-means in Appendix \ref{sec:appendix_algorithms_bpmeans}.

\section{Analysis of Correctness and Scalability}
\label{sec:theory}

We now establish the correctness and scalability of the proposed OCC
algorithms. In contrast to the coordination-free pattern in which
scalability is trivial and correctness often requires strong
assumptions or holds only in expectation, the OCC pattern leads to simple
proofs of correctness and challenging scalability analysis. However,
in many cases it is preferable to have algorithms that are correct and
probably fast rather than fast and possibly correct. 
We first establish serializability:
\begin{thm}[Serializability]\label{thm:serial}
  The distributed DP-means, OFL, and BP-means algorithms are
  serially equivalent to DP-means, OFL and BP-means, respectively.
\end{thm}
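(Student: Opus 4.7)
For each of the three algorithms I plan to exhibit an explicit total ordering of the atomic operations performed in the OCC execution such that the corresponding serial algorithm, run on exactly that ordering, produces the same output (or, in the stochastic OFL case, an identically distributed output). The ordering processes epochs in order $t = 1, 2, \ldots, N/(Pb)$; within each epoch I first take all ``safe'' operations (points $x_i$ whose nearest center in $\lclusters{t-1}$ is already within $\lambda$, together with OFL points whose parallel proposal coin flips to ``no'') in an arbitrary fixed order such as by processor and block index, and then take the ``proposing'' operations in the exact order used by the serializing validator. The proof then proceeds by induction on $t$, showing that after processing epoch $t$ the cluster state and the per-point assignments agree between the OCC and the reordered serial executions.

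\textbf{DP-means.} Correctness of the ordering rests on two observations. First, within epoch $t$ every safe operation reads only $\lclusters{t-1}$, which no safe operation mutates, so safe operations pairwise commute and produce identical assignments under the parallel execution and under any serial order. Second, \texttt{DPValidate} processes each proposed center $x$ against $\lclusters{t-1}$ together with the new centers already accepted earlier in the same validation pass --- which is exactly the cluster set that serial DP-means would have accumulated by the time $x$ is processed in the chosen ordering. If $x$ lies within $\lambda$ of some already-accepted $\mu^*$, \texttt{DPValidate} sets $\texttt{Ref}(x) \leftarrow \mu^*$, matching the serial ``assign to nearest existing cluster'' branch; otherwise $x$ is added to $\clusters$, matching the serial ``create new cluster'' branch.

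\textbf{OFL and BP-means.} For OFL the skeleton is identical but serial equivalence is now in distribution. I would argue by induction and the tower rule that, conditional on the cluster state at each step of the serial schedule, the OCC probability of accepting $x$ (propose $\times$ validate) equals the serial acceptance probability with respect to that same state. The parallel proposal probability $\min(1, d^2/\lambda^2)$ and the validation acceptance probability $\min(d^{*2}, d^2)/d^2$ are chained against the epoch's accumulated cluster set so that their product reduces to the serial acceptance probability at the appropriate position. For BP-means the argument mirrors DP-means: the per-point analysis step reads only the previous epoch's feature set, \texttt{BPValidate} enforces the serial ``introduce a new feature only if the residual exceeds $\lambda$'' rule against the features accepted earlier in the current epoch's validation, and the feature-mean update $F \leftarrow (Z^T Z)^{-1} Z^T X$ decomposes additively over data and is therefore a single bulk transaction agreeing with the serial update.

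\textbf{Main obstacle.} The bulk of the work is the stochastic bookkeeping in the OFL case: one must verify at every step of the schedule that the two-stage OCC acceptance probability induced by $d$ and $d^*$ matches the single-stage serial probability induced by the current $d^*$ alone, and then propagate this matching through the induction so that the joint distribution over the full cluster set coincides with a serial run. The DP-means and BP-means cases, being essentially deterministic, reduce to routine combinatorial checks that the corresponding Validate subroutines implement the serial introduction rule relative to the epoch-local clusters or features accepted so far.
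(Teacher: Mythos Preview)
Your proposal is correct and follows essentially the same route as the paper: construct an explicit serial schedule (epochs in order; within an epoch, safe operations first, then the proposals in the validator's order), and prove by induction on the schedule that the OCC and serial executions maintain identical cluster/feature state, with the OFL case handled by matching the two-stage acceptance probability to the one-stage serial probability. The only cosmetic difference is that for OFL the paper orders points within an epoch purely by index rather than by safe-before-proposing; both orderings yield a valid serialization and the probability calculation is the same.
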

The proof (\appendref{ssec:appendix_serialization}) of
\thmref{thm:serial} is relatively straightforward and is obtained by
constructing a permutation function that describes an equivalent
serial execution for each distributed execution. The proof can easily 
be extended to many other machine learning algorithms.

Serializability allows us to easily extend important theoretical
properties of the serial algorithm to the distributed setting.
For example, by invoking serializability, we can establish the
following result for the OCC version of the online facility location
(OFL) algorithm:
\begin{lem}
  \label{lem:approxbd}
  If the data is randomly ordered, then
  the OCC OFL algorithm provides a constant-factor
  approximation for the DP-means objective.
  If the data is adversarially ordered, then OCC OFL provides a
  log-factor approximation to the DP-means objective.
\end{lem}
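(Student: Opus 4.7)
The plan is to reduce the analysis to Meyerson's original analysis of serial OFL by invoking Theorem~\ref{thm:serial}. Since serializability guarantees that any execution of the distributed OCC OFL algorithm is equivalent to running serial OFL on some permutation $\sigma$ of the input data, the cost of the clustering produced by OCC OFL equals the cost of serial OFL on the reordered input. Meyerson's analysis then supplies the approximation factors, and the only work left is to connect the two notions of ``ordering'' and to translate the FL objective to the DP-means objective.

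First I would explicitly describe the permutation $\sigma$ produced by the serializability argument: points are processed in epoch order, and within each epoch $t$ all points from processor $1$ come first (followed by those validated in \texttt{OFLValidate}), then all points from processor $2$, and so on. Because the partition $\mathcal{B}(p,t)$ assigns points to processor--epoch slots purely by input index, applying a uniformly random permutation to the input induces a uniformly random permutation $\sigma$ on the equivalent serial order. For the adversarial case, $\sigma$ is simply some fixed permutation on $N$ points.

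Second I would apply Meyerson's result: serial OFL on a uniformly random arrival order achieves an expected constant-factor approximation to the optimal FL cost, while on any arrival order it achieves an $O(\log N)$-factor competitive ratio. Pulling these bounds back through the equivalence gives the claimed guarantees for the FL cost of OCC OFL. To pass from FL cost to the DP-means objective $J(\clusters)=\sum_x \min_\mu \bregd{x}{\mu}^2+\lambda^2|\clusters|$, I would note that the two objectives coincide when candidate centers are taken to be the data points themselves, and that restricting centers to data points only inflates the optimum by a small constant (a point at distance $d$ from its optimal center is at distance at most $2d$ from the data point nearest that center, so the squared-distance part loses at most a factor of $4$).

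The main obstacle will be verifying cleanly that the permutation $\sigma$ delivered by Theorem~\ref{thm:serial} is uniformly distributed when the input is --- in particular, that the stochastic acceptance step in \texttt{OFLValidate} and the proposal step in the parallel loop do not couple with the input order to bias $\sigma$ in a way that breaks Meyerson's randomness assumption. This is fundamentally a symmetry argument --- the algorithmic randomness is independent of the data order --- but it has to be stated with some care because the identity of the proposed centers $\gclusters$ depends on both the data and the previously accepted $\clusters$. Once that bookkeeping is done, the rest of the proof is essentially a direct appeal to~\cite{Meyerson01}.
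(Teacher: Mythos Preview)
Your high-level strategy---invoke serializability to reduce to serial OFL, cite Meyerson, then translate FL to DP-means---matches the paper's. But there is a genuine gap in the middle step.

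Meyerson's analysis in~\cite{Meyerson01} is for \emph{metric} facility location: the service cost $d(x,\mu)$ satisfies the triangle inequality. The objective here is $J(\clusters)=\sum_x\min_\mu\|x-\mu\|^2+\lambda^2|\clusters|$, where the service cost is a \emph{squared} Euclidean distance, which is not a metric. Consequently you cannot simply import Meyerson's constant and log factors; the paper instead re-derives Meyerson's argument for squared distances, replacing each use of the triangle inequality by the relaxed bound $\|a+b\|^2\le 2\|a\|^2+2\|b\|^2$. This preserves the structure of the proof but changes the constants (the paper arrives at a factor of $68$ against $J(\clusters^{\text{FL}})$). Your proposal treats this step as a black-box citation, whereas in fact it is where most of the technical work lies.

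Two smaller points. First, the permutation you describe (within an epoch, non-proposing points first, then validated points in master order) is the DP-means serialization, not the OFL one. For OFL the paper's equivalent serial order is simpler: epoch order, then index order within an epoch---so with the natural block assignment the serial order is exactly the input index order. Second, this means your ``main obstacle'' dissolves: the serialization order for OFL does not depend on the stochastic accept/reject decisions at all, so if the input is uniformly permuted the equivalent serial order is trivially uniform. The algorithmic randomness you worry about is internal to OFL and is handled inside Meyerson's (adapted) analysis, not in the permutation argument.

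Your FL-to-DP-means translation via a constant blowup is fine; the paper does it with a factor-$2$ argument (restricting centers to data points and comparing at the optimal $K^*$), but your factor-$4$ sketch would equally support a constant-factor claim.
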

The proof (\appendref{ssec:appendix_serialization}) of
\lemref{lem:approxbd} is first derived in the serial setting then
extended to the distributed setting through serializability. In
contrast to divide-and-conquer schemes, whose approximation bounds
commonly depend \emph{multiplicatively} on the number of levels \cite{meyerson03},
\lemref{lem:approxbd} is unaffected by distributed processing and has
no communication or coarsening tradeoffs. Furthermore, to retain the
same factors as a batch algorithm on the full data, divide-and-conquer
schemes need a large number of preliminary centers at lower levels \citep{meyerson03,ailon09}. In
that case, the communication cost can be high, since all proposed clusters are sent at the same time, as opposed to the OCC approach.
We address the communication overhead (the number of
rejections) for our scheme next.

\paragraph{Scalability}

The scalability of the OCC algorithms depends on the number of
transactions that
are rejected during validation (i.e., the rejection rate).
While a general scalability analysis can be challenging, it is often
possible to gain some insight into the asymptotic dependencies by
making simplifying assumptions. In contrast to the coordination-free
approach, we can still \emph{safely} apply OCC algorithms in the
absence of a scalability analysis or when simplifying assumptions do
not hold.

To illustrate the techniques employed in OCC scalability analysis we
study the DP-Means algorithm. The scalability limiting factor of the
DP-Means algorithm is determined by the number of points that must be
serially validated. In the following theorem we show that the
communication cost only depends on the number of clusters and
processing resources and does not directly depend on the number of
data points. The proof is in App.~\ref{ssec:appendix_dpmeans}.
\begin{thm}[DP-Means Scalability] 
\label{thm:dp_rate}
Assume $N$ data points are generated iid to form
a random number ($K_N$) of well-spaced clusters of diameter $\lambda$:
$\lambda$ is an upper bound on the distances within clusters
and a lower bound on the distance between clusters.
Then the expected number of
\emph{serially} validated points is bounded above by $P b + \Ex{K_N}$
for $P$ processors and $b$ points per epoch.
\end{thm}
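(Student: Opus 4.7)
The plan is to reduce the scalability bound to a probabilistic counting argument on iid draws. I decompose the serially validated count into the serial new-cluster count $K_N$ plus the ``excess'' proposals that redundantly nominate the same new true cluster within a single epoch, and then bound that excess by $Pb$ using linearity of expectation.

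First, I would prove a structural invariant by induction on the epoch $t$: at the start of epoch $t$, the accepted set $\clusters$ contains exactly one center per true cluster represented in epochs $1, \dots, t-1$. Under well-spacedness, the test $\bregd{x_i}{\mu} > \lambda$ is equivalent to ``$x_i$ and $\mu$ lie in different true clusters,'' so a point proposes iff its true cluster has not previously been seen. Inside \texttt{DPValidate}, two proposals from the same new true cluster are within $\lambda$ of each other (so only the first survives), while proposals from distinct new clusters are more than $\lambda$ apart (so all survive). Taking the union with the global $\clusters$ preserves the invariant. As an immediate corollary, $x_i$ is added to $\gclusters$ iff the first $C_{k(i)}$-point in the entire stream lies in the same epoch as $x_i$.

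Let $N_k$ be the number of $C_k$-points in the first epoch containing any $C_k$-point (and $0$ if $C_k$ never appears), so that $S := \sum_k N_k$ is the total number of serially validated points. Since $\sum_k \Ind{N_k \ge 1} = K_N$, one has
\[
\Ex{S} \;=\; \Ex{K_N} \;+\; \sum_k \Ex{(N_k-1)_+}.
\]
To control the excess, I would condition on $i^\star$, the global position of the first $C_k$-point, and on its within-epoch index $a^\star \in \{1, \dots, Pb\}$. By the iid assumption, the positions after $i^\star$ remain iid copies of the data distribution, so the remaining $Pb - a^\star$ slots in that epoch each independently belong to $C_k$ with probability $q_k := \Pr(C_k)$. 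Hence $\Ex{(N_k-1)_+ \mid i^\star, a^\star} = (Pb - a^\star)q_k \le (Pb - 1)q_k$, which integrates to $\Ex{(N_k-1)_+} \le (Pb-1)q_k$.

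Finally, well-spacedness forces each data point to lie in exactly one true cluster, so $\sum_k q_k = 1$ and therefore $\sum_k \Ex{(N_k-1)_+} \le Pb - 1 \le Pb$, yielding $\Ex{S} \le \Ex{K_N} + Pb$. The main obstacle I expect is the structural invariant: despite \texttt{DPValidate}'s local accumulator starting at $\emptyset$ rather than from the global $\clusters$, one must carefully argue that no two accepted centers across the entire run ever lie in the same true cluster, since this is what lets us equate ``serially validated'' with ``same epoch as first occurrence of its cluster.'' Once the invariant is in place, the remainder is a one-line application of linearity of expectation under iid sampling.
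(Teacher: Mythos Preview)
Your proposal is correct and shares the paper's overall structure: both arguments first establish (under well-spacedness) that points from a given true cluster are sent to the master only in the first epoch in which that cluster appears, and then bound the expected per-cluster contribution and sum over clusters. The difference is purely in the counting step. The paper computes $\Ex{S_j}$ exactly as $\pi_j Pb \cdot \frac{1-(1-\pi_j)^N}{1-(1-\pi_j)^{Pb}}$ via a geometric-times-binomial calculation, and then invokes an analytic lemma, $\frac{1}{1-(1-\pi)^m} \le \frac{1}{m\pi}+1$, to split the sum into $Pb\sum_j \pi_j(1-(1-\pi_j)^N) \le Pb$ and $\sum_j (1-(1-\pi_j)^N) = \Ex{K_N}$. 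You instead peel off the first occurrence, writing $S = K_N + \sum_k (N_k-1)_+$, and bound the excess by conditioning on the position of the first $C_k$-point and observing that the remaining at most $Pb-1$ slots in its epoch are still iid with probability $q_k$; summing gives $\sum_k (Pb-1)q_k = Pb-1$. Your route is more elementary (no auxiliary inequality) and even yields the marginally sharper $Pb-1+\Ex{K_N}$; the paper's route has the side benefit of producing a closed-form for $\Ex{S_j}$, which could be useful if one wanted a matching lower bound or a finer dependence on the $\pi_j$. Your flagged concern about \texttt{DPValidate} initializing its accumulator to $\emptyset$ is handled exactly as you sketch: any point reaching \texttt{DPValidate} was already more than $\lambda$ from every global center, hence (by well-spacedness) from a cluster not yet represented, so comparing only against centers accepted within the current call suffices.
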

Under the separation assumptions of the theorem, the number of clusters present in $N$
data points, $K_{N}$, is exactly equal to the number of clusters found
by DP-Means in $N$ data points; call this latter quantity $k_{N}$.
The
experimental results in \figref{fig:sim_coupon0} suggest that
the bound of $Pb+k_N$ may hold more generally beyond the assumptions above. Since the master must
process at least $k_N$ points, the overhead caused by rejections is $P
b$ and independent of $N$.

To analyze the total running time, we note that after each of the
$N/(Pb)$ epochs the master and workers must communicate. Each worker
must process $N/P$ data points, and the master sees at most $k_N + Pb$
points. Thus, the total expected running time is $\BigO{N/(Pb) + N/P +
  Pb}$.

\section{Evaluation}
\label{sec:evaluation}

For our experiments, we generated synthetic data for clustering (DP-means and OFL) 
and feature modeling (BP-means).  The cluster and feature proportions were generated 
nonparametrically as described below.  All data points were generated in 
$\mathbb{R}^{16}$ space.  The threshold parameter $\lambda$ was fixed at 1.

\textbf{Clustering}:
The cluster proportions and indicators were generated simultaneously using the 
stick-breaking procedure for Dirichlet processes---`sticks' are `broken' on-the-fly 
to generate new clusters as and when necessary.\footnote{We chose to use stick-breaking procedures because the Chinese restaurant and Indian buffet processes are inherently sequential.
Stick-breaking procedures can be distributed by either truncation, or using OCC!}
For our experiments, we used a fixed concentration parameter $\theta=1$.
Cluster means were sampled $\mu_k\sim N(0,I_{16})$, and data points were generated at $x_i\sim N(\mu_{z_i},\frac{1}{4}I_{16})$.

\textbf{Feature modeling}:
We use the stick-breaking procedure of \cite{paisley2012:stick} to generate feature weights.
Unlike with Dirichlet processes, we are unable to perform stick-breaking on-the-fly with 
Beta processes.  Instead, we generate enough features so that with high probability 
$(>0.9999)$ the remaining non-generated features will have negligible weights $(<0.0001)$.
The concentration parameter was also fixed at $\theta=1$.  We generated feature means 
$f_k\sim N(0,I_{16})$ and data points $x_i\sim N(\sum_k z_{ik}f_k,\frac{1}{4}I_{16})$.

\subsection{Simulated experiments}
\label{ssec:evaluation_simulated}

\begin{figure}[ht]
  \centering
  \begin{subfigure}[b]{0.32\textwidth}
  	\centering
	  \includegraphics[width=140pt]{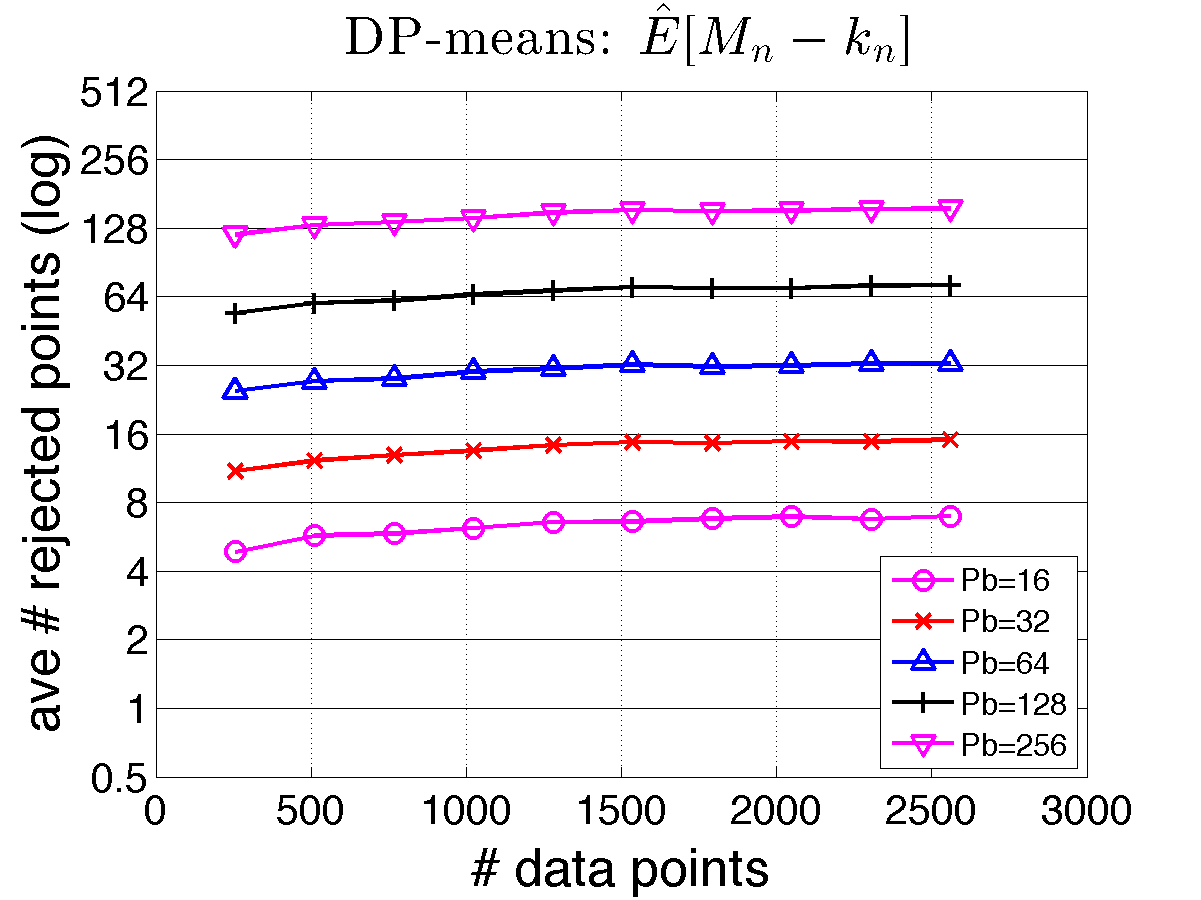}
  	\caption{\footnotesize OCC DP-means}
	  \label{fig:dpm_sim_coupon0}
	\end{subfigure}
  \begin{subfigure}[b]{0.32\textwidth}
  	\centering
	  \includegraphics[width=140pt]{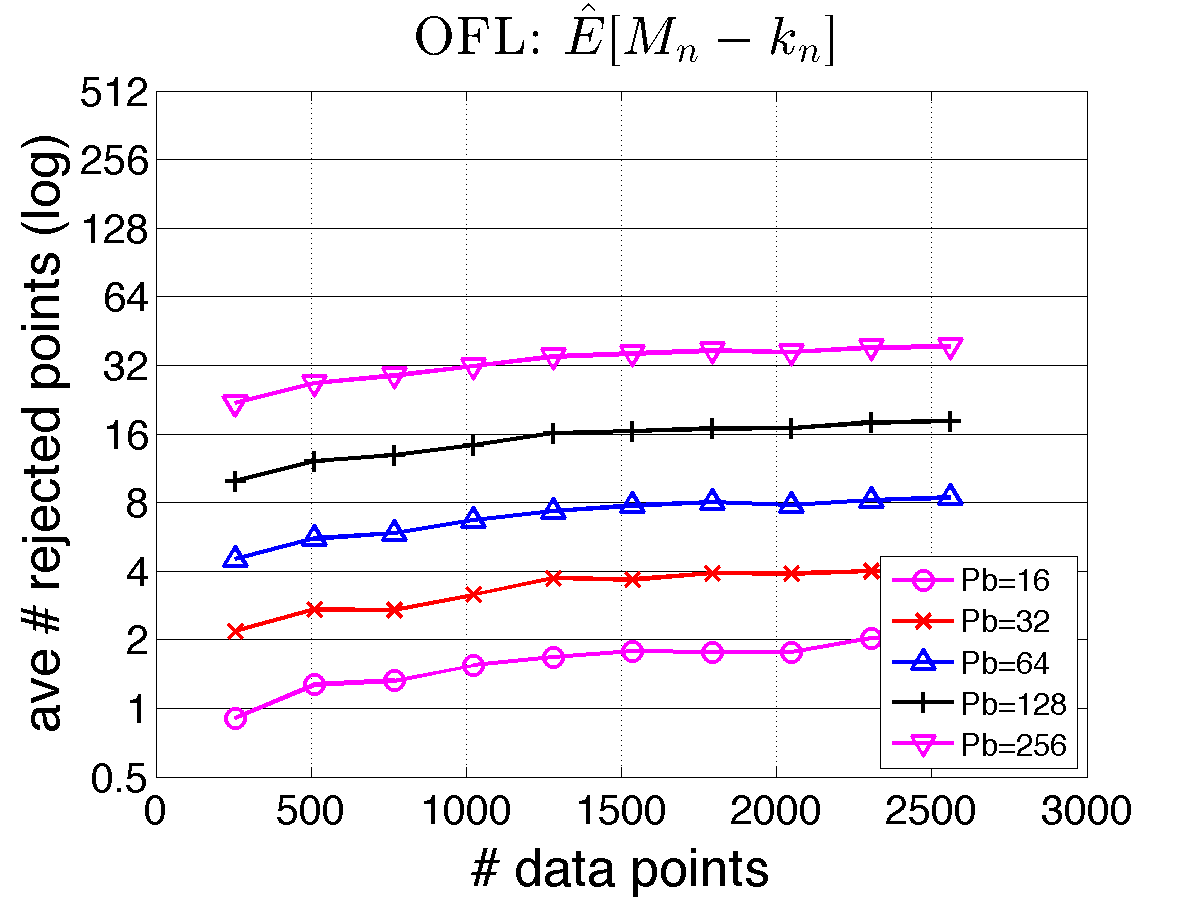}
  	\caption{\footnotesize OCC OFL}
	  \label{fig:ofl_sim_coupon0}
	\end{subfigure}
  \begin{subfigure}[b]{0.32\textwidth}
  	\centering
	  \includegraphics[width=140pt]{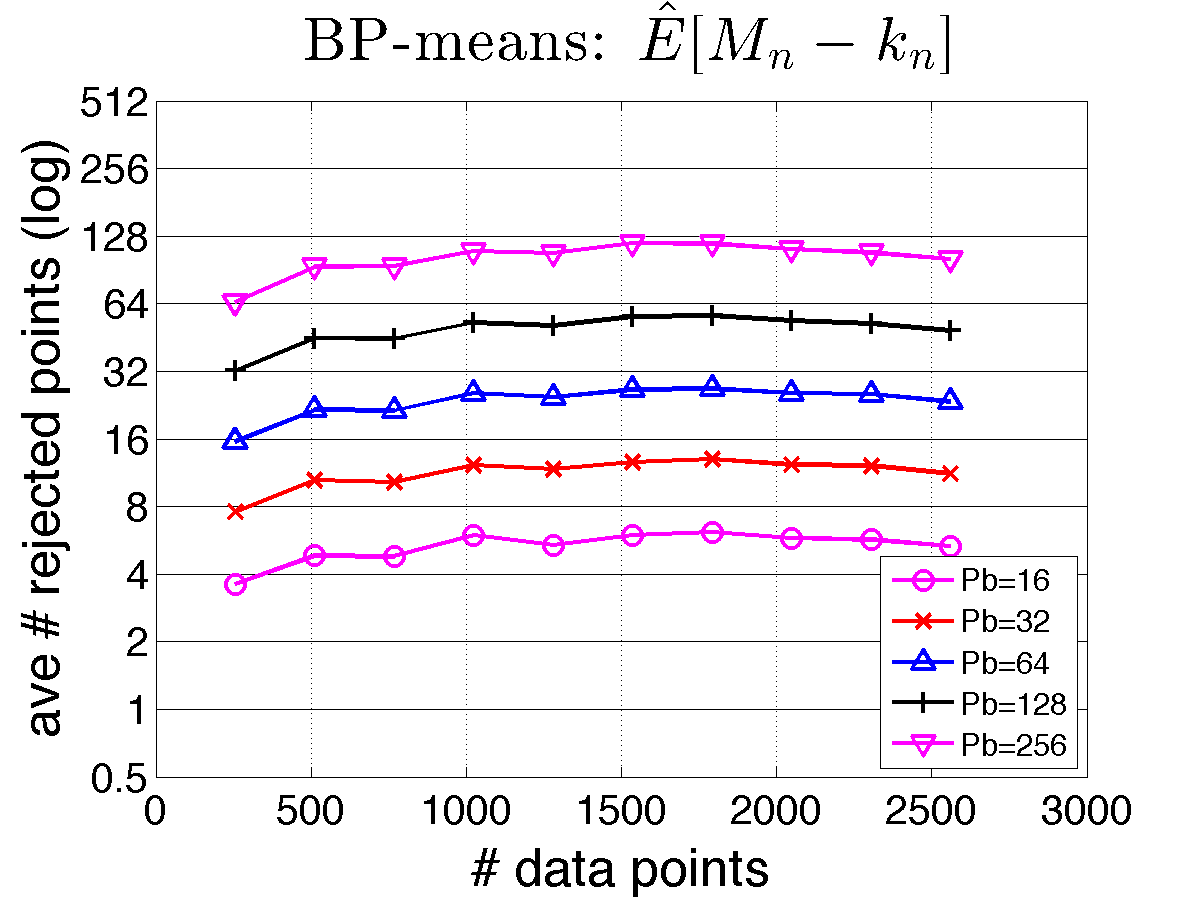}
  	\caption{\footnotesize OCC BP-means}
	  \label{fig:bpm_sim_coupon0}
	\end{subfigure}
	\caption{\footnotesize Simulated distributed DP-means, OFL and BP-means: expected number of data points proposed but not accepted as new clusters / features is independent of size of data set.}
	\label{fig:sim_coupon0}
\end{figure}

To test the efficiency of our algorithms, we simulated the first iteration (one complete pass over all the data, where most clusters / features are created and thus greatest coordination is needed) of each algorithm in MATLAB.
The number of data points, $N$, was varied from 256 to 2560 in intervals of 256.
We also varied $Pb$, the number of data points processed in one epoch, from 16 to 256 in powers of 2.
For each value of $N$ and $Pb$, we empirically measured $k_N$, the number of accepted clusters / features, and $M_N$, the number of proposed clusters / features.
This was repeated 400 times to obtain the empirical average $\hat{\mathbb{E}}[M_N-k_N]$,
the number of rejections.

For OCC DP-means, we observe $\hat{\mathbb{E}}[M_N-k_N]$ is bounded above by $Pb$ (Fig.~\ref{fig:dpm_sim_coupon0}), and that this bound is independent of the data set size, even when the assumptions of Thm \ref{thm:dp_rate} are violated.
(We also verified that similar empirical results are obtained when the assumptions are not violated; see Appendix \ref{ssec:appendix_dpmeans}.)
As shown in Fig.~\ref{fig:ofl_sim_coupon0} and Fig.~\ref{fig:bpm_sim_coupon0} 
the same behavior is observed for the OCC OFL and OCC BP-means algorithms.

\subsection{Distributed implementation and experiments}
\label{ssec:evaluation_spark}

We also implemented the distributed algorithms in Spark \cite{zaharia2010:spark}, 
an open-source cluster computing system.
The DP-means and BP-means algorithms were bootstrapped by pre-processing a small 
number of data points (1/16 of the first $Pb$ points)---this 
reduces the number of data points sent to the master on the first epoch, while still preserving serializability of the algorithms.
Our Spark implementations were tested on Amazon EC2 by processing a fixed data set on 1, 2, 4, 8 m2.4xlarge (memory-optimized, quadruple extra large, with 8 virtual cores and 64.8GiB memory) instances.

Ideally, to process the same amount of data, an algorithm and implementation with perfect scaling would take half the runtime on 8 machines as it would on 4, and so on.
The plots in Figure \ref{fig:spark_norm} shows this comparison by dividing all runtimes by the runtime on one machine.

\textbf{DP-means}: We ran the distributed DP-means algorithm on $2^{27}\approx134$M data points, using $\lambda=2$.
The block size $b$ was chosen to keep $Pb=2^{23}\approx8$M constant.
The algorithm was run for 5 iterations (complete pass over all data in 16 epochs).
We were able to get perfect scaling (Figure \ref{fig:dpm_spark_norm}) in all but the first iteration, when the master has to perform the most synchronization of proposed centers.

\textbf{OFL}: The distributed OFL algorithm was run on $2^{20}\approx1$M data points, using $\lambda=2$.
Unlike DP-means and BP-means, we did not perform bootstrapping.
Also, OFL is a single pass (one iteration) algorithm.
The block size $b$ was chosen such that $Pb=2^{16}\approx66$K data points are processed each epoch, which gives us 16 epochs.
Figure \ref{fig:ofl_spark_norm} shows that we get no scaling in the first epoch, where all the work is performed by the master processing all $Pb$ data points.
In later epochs, the master's workload decreases as fewer data points are proposed, but the workers' workload increases as the total number of centers increases.
Thus, scaling improves in the later epochs.
%Thus, the runtime is increasingly dominated by computation on the worker, and we achieve 1/2 the ideal scaling by the 16th epoch.

\textbf{BP-means}: Distributed BP-means was run on $2^{23}\approx8$M data points, with $\lambda=1$; block size was chosen such that $Pb=2^{19}\approx0.5$M is constant.
Five iterations were run, with 16 epochs per iteration.
As with DP-means, we were able to achieve nearly perfect scaling; see Figure \ref{fig:bpm_spark_norm}.

\begin{figure}[t]
  \centering
  \begin{subfigure}[b]{0.32\textwidth}
  	\centering
	  \includegraphics[width=140pt]{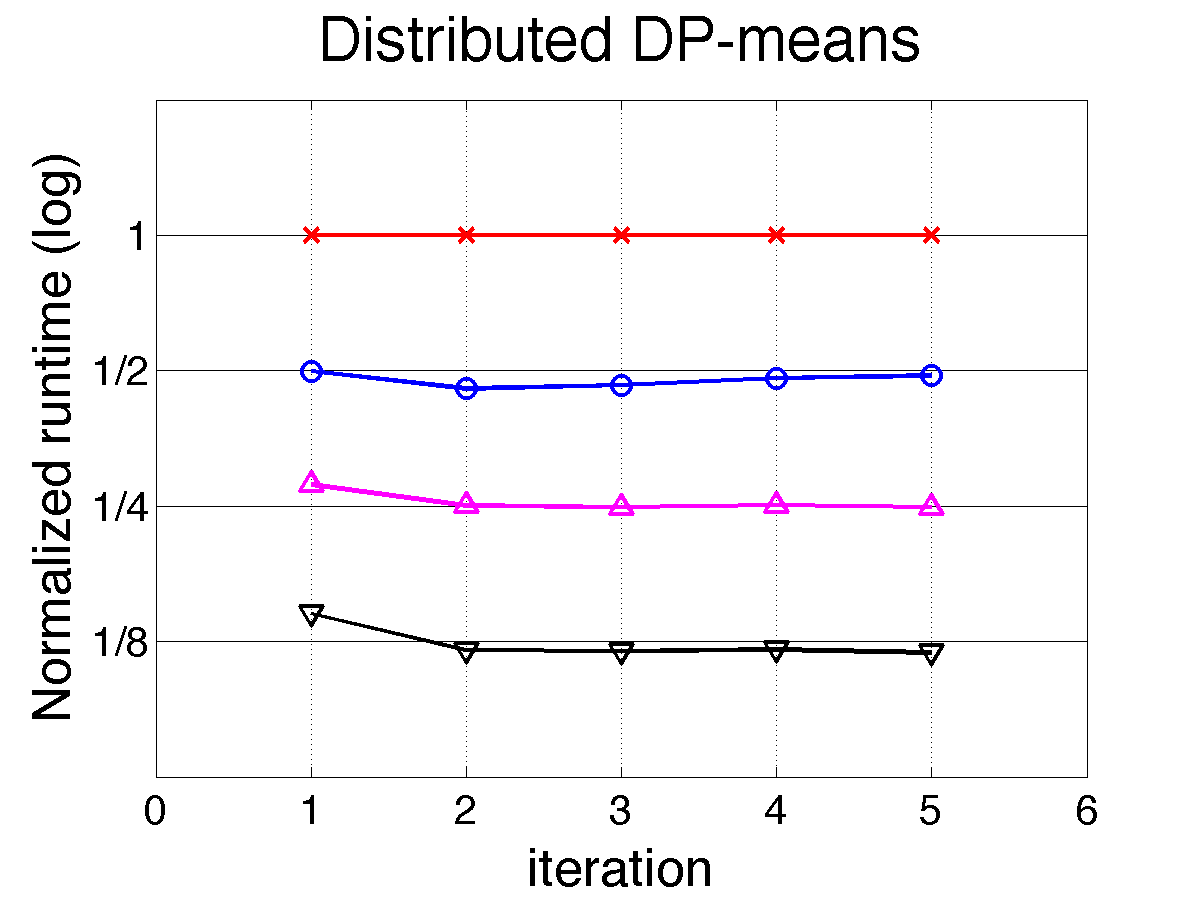}
  	\caption{\footnotesize OCC DP-means}
	  \label{fig:dpm_spark_norm}
	\end{subfigure}
  \begin{subfigure}[b]{0.32\textwidth}
  	\centering
	  \includegraphics[width=140pt]{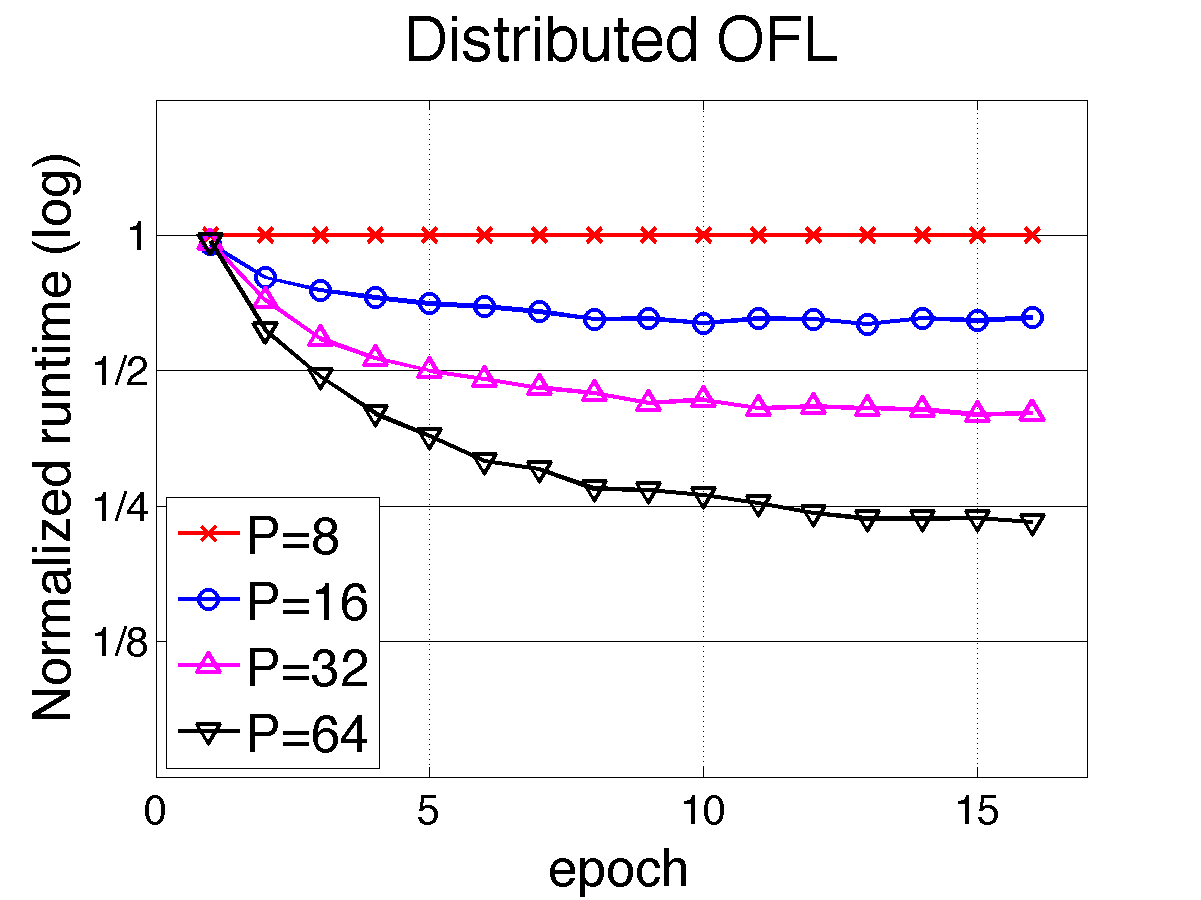}
  	\caption{\footnotesize OCC OFL}
	  \label{fig:ofl_spark_norm}
	\end{subfigure}
  \begin{subfigure}[b]{0.32\textwidth}
  	\centering
	  \includegraphics[width=140pt]{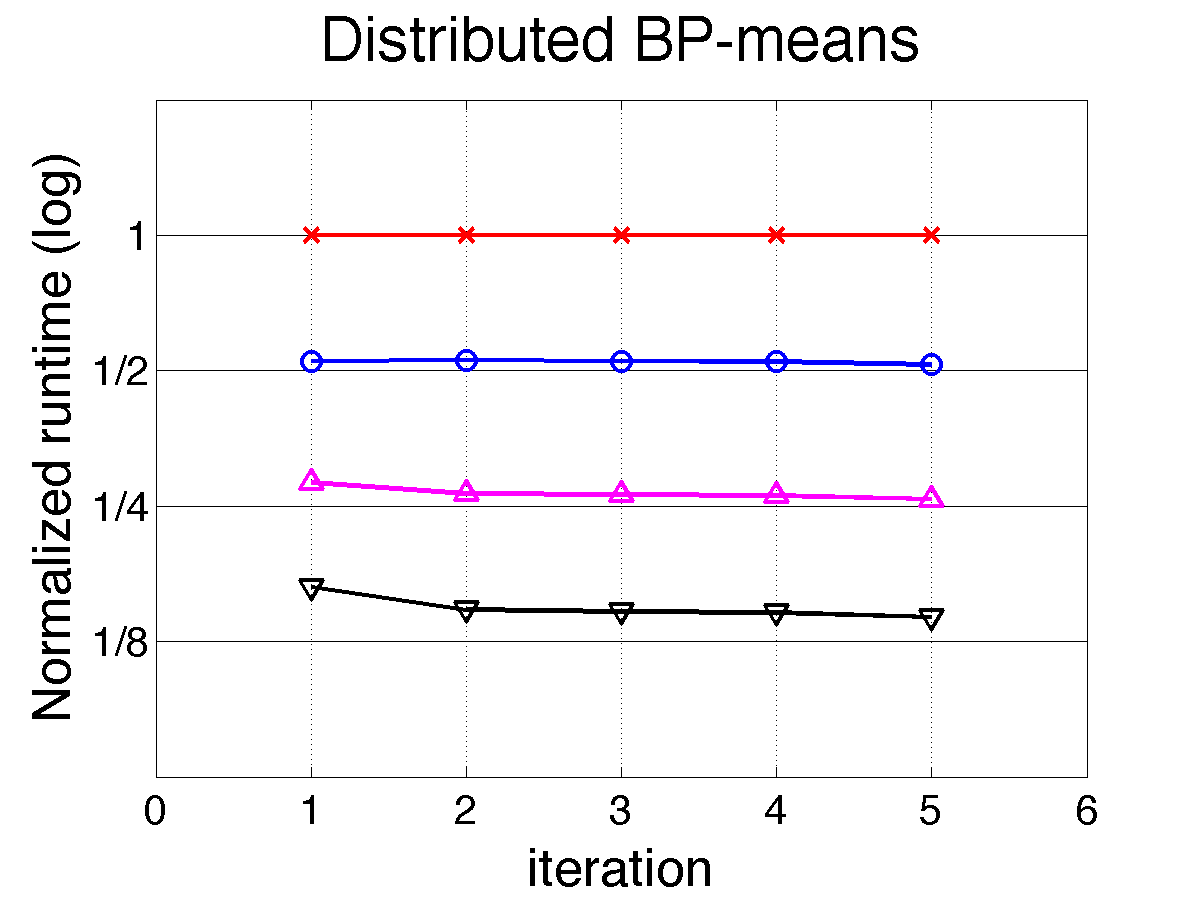}
  	\caption{\footnotesize OCC BP-means}
	  \label{fig:bpm_spark_norm}
	\end{subfigure}
	\caption{\footnotesize Normalized runtime for distributed algorithms. Runtime of each iteration / epoch is divided by that using 1 machine ($P=8$).
	Ideally, the runtime with 2, 4, 8 machines ($P=16,32,64$) should be respectively 1/2, 1/4, 1/8 of the runtime using 1 machine.
	OCC DP-means and BP-means obtain nearly perfect scaling for all iterations.
	OCC OFL rejects a lot initially, but quickly gets better in later epochs.}
	\label{fig:spark_norm}
\end{figure}

\section{Related work}
\label{sec:relatedwork}

Others have proposed alternatives to mutual exclusion and 
coordination-free parallelism for machine learning algorithm design.
Newman~\cite{Newman07} proposed transforming the underlying model to
expose additional parallelism while preserving the marginal
posterior. However, such constructions can be challenging or
infeasible and many hinder mixing or convergence.  Likewise,
Lovell~\cite{Lovell13} proposed a reparameterization of the underlying
model to expose additional parallelism through conditional
independence.

Additional work similar in spirit to ours using OCC-like techniques 
includes~\citet{Doshi09} who proposed an approximate parallel sampling 
algorithm for the IBP which is made exact by introducing an additional 
Metropolis-Hastings step, and~\citet{Xu11} who proposed a look-ahead 
strategy in which future samples are computed optimistically based on 
the likely outcomes of current samples.

A great amount of work addresses scalable clustering algorithms \cite{dhillon00,das07,ene11}.  
Many algorithms with provable approximation factors are streaming algorithms \citep{meyerson03,shindler11, charikar03} 
and inherently use hierarchies, or related divide-and-conquer approaches \citep{ailon09}. The approximation factors in such algorithms multiply across levels \citep{meyerson03}, and demand a careful tradeoff between communication and approximation quality that is obviated in our framework.
Other approaches use core sets \cite{badoiu02,feldman11}. A lot of methods \cite{ailon09,bahmani12,shindler11} first collect a set of centers and then re-cluster them, and therefore need to communicate all intermediate centers. Our approach avoids that, since a center causes no rejections in the epochs after it is established: the rejection rate does not grow with $K$. Still, as our examples demonstrate, our OCC framework can easily integrate and exploit many of the ideas in the cited works.

\section{Discussion}
\label{sec:conclusion}

In this paper we have shown how optimistic concurrency control can
be usefully employed in the design of distributed machine learning 
algorithms. As opposed to previous approaches, this preserves correctness, in most cases at a small cost.
We established the equivalence of our distributed OCC 
DP-means, OFL and BP-means algorithms to their serial counterparts, thus preserving 
their theoretical properties.  In particular, the strong approximation 
guarantees of serial OFL translate immediately to the distributed algorithm.
Our theoretical analysis ensures OCC DP-means achieves high parallelism 
without sacrificing correctness.  We implemented and evaluated all 
three OCC algorithms on a distributed computing platform and demonstrate 
strong scalability in practice.

We believe that there is much more to do in this vein.
Indeed, machine learning algorithms have many properties that distinguish
them from classical database operations and may allow going beyond the
classic formulation of optimistic concurrency control.  In particular
we may be able to partially or \emph{probabilistically} accept non-serializable
operations in a way that preserves underlying algorithm invariants.  
Laws of large numbers and concentration theorems may provide tools for 
designing such operations.  Moreover, the conflict detection mechanism 
can be treated as a control knob, allowing us to softly switch between 
stable, theoretically sound algorithms and potentially faster coordination-free 
algorithms.

\subsection*{Acknowledgments}
This research is supported in part by NSF CISE Expeditions award CCF-1139158 and DARPA XData Award FA8750-12-2-0331, and  gifts from Amazon Web Services, Google, SAP,  Blue Goji, Cisco, Clearstory Data, Cloudera, Ericsson, Facebook, General Electric, Hortonworks, Intel, Microsoft, NetApp, Oracle, Samsung, Splunk, VMware and Yahoo!.
This material is also based upon work supported in part by the Office of
Naval Research under contract/grant number N00014-11-1-0688. 
X. Pan's work is also supported in part by a DSO National Laboratories Postgraduate Scholarship.
T. Broderick's work is supported by a Berkeley Fellowship.

\bibliographystyle{abbrvnat}
\bibliography{references}

\newpage
\appendix

\section{Pseudocode for OCC BP-means}
\label{sec:appendix_algorithms_bpmeans}

Here we show the Serial BP-Means algorithm (\algref{alg:bpm}) and a parallel implementation of BP-means using the OCC pattern (\algref{alg:bpdist} and \algref{alg:bp_validate}), similar to OCC DP-means.
Instead of proposing new clusters centered at the data point $x_i$, in OCC BP-means we propose features $f_i^{new}$ that allow us to obtain perfect representations of the data point.
The validation process continues to improve on the representation $x_i\approx\sum_kz_{ik}f_k$ by using the most recently accepted features $f_{k'}\in\gclusters$, and only accepts a proposed feature if the data point is still not well-represented.

 \begin{figure}[h]
      \centering
      {\small
      \begin{algorithm}[H]
        \DontPrintSemicolon
        \caption{Parallel BP-means}
        \label{alg:bpdist}
        \Input{data $\{x_i\}_{i=1}^N$, threshold $\lambda$}
        \Input{Epoch size $b$ and $P$ processors} \Input{Partitioning
          $\mathcal{B}(p,t)$ of data $\set{x_i}_{i \in
            \mathcal{B}(p,t)}$ to processor-epochs where $b =
          \size{\mathcal{B}(p,t)}$} $\clusters \leftarrow \emptyset $
        \; \While{not converged}{ \For{epoch $t$ = 1 to $N/(Pb)$}{
            $\gclusters \leftarrow \emptyset$ \tcp*{New candidate
              features}
            \ParForAll{$p \in \set{1, \ldots, P}$}{ \tcp{Process local
                data} \For{$i \in \mathcal{B}(p,t)$} { \tcp{Optimistic
                  Transaction} \For{$f_k\in\clusters$}{ Set $z_{ik}$
                  to minimize $\|x_i - \sum_j z_{ij} f_j\|_{2}^{2}$ }
                \If{$ \|x_i-\sum_j z_{ij} f_j\|_2^2 > \lambda^2$}{
                  $f_i^{new} \leftarrow x_i-\sum_j z_{ij} f_j$\; $z_i
                  \leftarrow z_i \oplus$ \Ref{$f_i^{new}$} \;
                  $\gclusters \leftarrow \gclusters \union f_i^{new}$
                } } }
            \tcp{Serially validate features} $\clusters \leftarrow
            \clusters \,\, \union \,\, \DPValidate(\gclusters)$ \; }
          % $\clusters^{(t)} \leftarrow$ \NewClusters \;
          Compute $Z^TZ=\sum_iz_iz_i^T$ and $Z^TX=\sum_iz_ix_i^T$ in
          parallel\; Re-estimate features
          $F\leftarrow(Z^TZ)^{-1}Z^TX$\; } \Output{Accepted feature
          centers $\clusters$}
      \end{algorithm}
    }
  \end{figure}

\begin{figure}[t]
\centering
  {\small
      \begin{algorithm}[H]
        \DontPrintSemicolon
        \caption{Serial BP-means}
        \label{alg:bpm}
        \Input{data $\{x_i\}_{i=1}^N$, threshold $\lambda$}
        Initialize $z_{i1}=1$, $f_1 = N^{-1}\sum_i x_i$, $K=1$\;
        \While{not converged}{
          \For{$i$ = 1 to $N$}{
            \For{$k$ = 1 to $K$}{
              Set $z_{ik}$ to minimize $\|x_i - \sum_{j=1}^{K} z_{ij} f_j\|_{2}^{2}$
            }
            \If{$\|x_i - \sum_{j=1}^{K} z_{ij} f_{i,j}\|_{2}^{2} > \lambda^2$}{
              Set $K \leftarrow K+1$\;
              Create feature $f_K \leftarrow x_i - \sum_{k=1}^{K} z_{ik} f_j$\;
              Assign $z_{iK} \leftarrow 1$ (and $z_{i K} \leftarrow 0$ for $j \ne i$)\;
            }
          }
          $F \leftarrow (Z^{T} Z)^{-1} Z^{T} X$\;
        }
      \end{algorithm}
    }
  \end{figure}

  \begin{figure}
\centering
    {\small
      \begin{algorithm}[H]
        \DontPrintSemicolon
        \caption{\texttt{BPValidate} }
        \label{alg:bp_validate}
        \Input{Set of proposed feature centers $\gclusters$}
        $\clusters \leftarrow \emptyset$ \;
        \For{$f^{new} \in \gclusters$}{
            \For{$f_{k'}\in\clusters$}{
              Set $z_{ik'}$ to minimize $\|f^{new} - \sum_{f_j\in\clusters} z_{ij} f_j\|_{2}^{2}$
            }
          \If{$\|f^{new}-\sum_{f_j\in\clusters}z_{ij}f_j\|_2^2 > \lambda^2$}{
            $\clusters \leftarrow \clusters \union \set{f^{new}-\sum_{f_j\in\clusters}z_{ij}f_j}$
          }
          \Ref{$f^{new}$} $\leftarrow \set{z_{ij}}_{f_j\in\clusters}$
        }
        \Output{Accepted feature centers $\clusters$}
      \end{algorithm}
    }
    \end{figure}

\section{Proof of serializability of distributed algorithms}
\label{ssec:appendix_serialization}

\subsection{Proof of Theorem~\ref{thm:serial} for DP-means}

We note that both distributed DP-means and BP-means iterate over $z$-updates and cluster / feature means re-estimation until convergence.
In each iteration, distributed DP-means and BP-means perform the same set of updates as their serial counterparts.
Thus, it suffices to show that each iteration of the distributed algorithm is serially equivalent to an iteration of the serial algorithm.

Consider the following ordering on transactions:
\begin{itemize}
\item Transactions on individual data points are ordered before transactions that re-estimate cluster / feature means are ordered.
\item A transaction on data point $x_i$ is ordered before a transaction on data point $x_j$ if
\begin{enumerate}
\item $x_i$ is processed in epoch $t$, $x_j$ is processed in epoch $t'$, and $t<t'$
\item $x_i$ and $x_j$ are processed in the same epoch, $x_i$ and $x_j$ are not sent to the master for validation, and $i<j$
\item $x_i$ and $x_j$ are processed in the same epoch, $x_i$ is not sent to the master for validation but $x_j$ is
\item $x_i$ and $x_j$ are processed in the same epoch, $x_i$ and $x_j$ are sent to the master for validation, and the master serially validates $x_i$ before $x_j$
\end{enumerate}
\end{itemize}

We show below that the distributed algorithms are equivalent to the serial algorithms under the above ordering, by inductively demonstrating that the outputs of each transaction is the same in both the distributed and serial algorithms.

Denote the set of clusters after the $t$ epoch as $\clusters^t$.

The first transaction on $x_j$ in the serial ordering has $\clusters^0$ as its input.
By definition of our ordering, this transaction belongs the first epoch, and is either (1) not sent to the master for validation, or (2) the first data point validated at the master.
Thus in both the serial and distributed algorithms, the first transaction either (1) assigns $x_j$ to the closest cluster in $\clusters^0$ if $\min_{\mu_k\in\clusters^0}\bregd{x_j}{\mu_k}<\lambda$, or (2) creates a new cluster with center at $x_j$ otherwise.

Now consider any other transaction on $x_j$ in epoch $t$.
\begin{description}
\item\textbf{Case 1}: $x_j$ is not sent to the master for validation.

In the distributed algorithm, the input to the transaction is $\clusters^{t-1}$.
Since the transaction is not sent to the master for validation, we can infer that there exists $\mu_k\in\clusters^{t-1}$ such that $\bregd{x_j}{\mu_k}<\lambda$.

In the serial algorithm, $x_j$ is ordered after any $x_i$ if (1) $x_i$ was processed in an earlier epoch, or (2) $x_i$ was processed in the same epoch but not sent to the master (i.e. does not create any new cluster) and $i<j$.
Thus, the input to this transaction is the set of clusters obtained at the end of the previous epoch, $\clusters^{t-1}$, and the serial algorithm assigns $x_j$ to the closest cluster in $\clusters^{t-1}$ (which is less than $\lambda$ away).

\item\textbf{Case 2}: $x_j$ is sent to the master for validation.

In the distributed algorithm, $x_j$ is not within $\lambda$ of any cluster center in $\clusters^{t-1}$.
Let $\gclusters^t$ be the new clusters created at the master in epoch $t$ before validating $x_j$.
The distributed algorithm either (1) assigns $x_j$ to $\mu_{k^*}=\argmin_{\mu_k\in\gclusters^t}\bregd{x_j}{\mu_k}$ if $\bregd{x_j}{\mu_k}\leq\lambda$, or (2) creates a new cluster with center at $x_j$ otherwise.

In the serial algorithm, $x_j$ is ordered after any $x_i$ if (1) $x_i$ was processed in an earlier epoch, or (2) $x_i$ was processed in the same epoch $t$, but $x_i$ was not sent to the master (i.e. does not create any new cluster), or (3) $x_i$ was processed in the same epoch $t$, $x_i$ was sent to the master, and serially validated at the master before $x_j$.
Thus, the input to the transaction is $\clusters^{t-1}\cup\gclusters^t$.
We know that $x_j$ is not within $\lambda$ of any cluster center in $\clusters^{t-1}$, so the outcome of the transaction is either (1) assign $x_j$ to $\mu_{k^*}=\argmin_{\mu_k\in\gclusters^t}\bregd{x_j}{\mu_k}$ if $\bregd{x_j}{\mu_k}\leq\lambda$, or (2) create a new cluster with center at $x_j$ otherwise.
This is exactly the same as the distributed algorithm.
\end{description}

\subsection{Proof of Theorem~\ref{thm:serial} for BP-means}

The serial ordering for BP-means is exactly the same as that in DP-means.
The proof for the serializability of BP-means follows the same argument as in the DP-means case, except that we perform feature assignments instead of cluster assignments.

\subsection{Proof of Theorem~\ref{thm:serial} for OFL}
Here we prove Theorem~\ref{thm:serial} that the distributed OFL algorithm is equivalent to a serial algorithm.
\begin{proof}[(Theorem~\ref{thm:serial}, OFL)]
  We show that with respect to the returned centers (facilities), the distributed OFL algorithm is equivalent to running the serial OFL algorithm on a particular permutation of the input data.
  We assume that the input data is randomly permuted and the indices $i$ of the points $x_i$ refer to this permutation. We assign the data points to processors by assigning the first $b$ points to processor $p_1$, the next $b$ points to processor $p_2$, and so on, cycling through the processors and assigning them batches of $b$ points, as illustrated in Figure~\ref{fig:ofl_serialization_order}. In this respect, our ordering is generic, and can be adapted to any assignments of points to processors. We assume that each processor visits its points in the order induced by the indices, and likewise the master processes the points of an epoch in that order.
  
  \begin{figure}
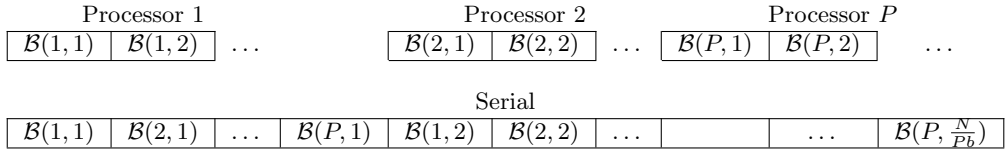

    \centering
    \hspace{-30pt}
    \begin{minipage}{1.0\linewidth}
      {\small
        \begin{tabular}{|c|c|cc|c|c|c|c|c|c}
          \multicolumn{3}{c}{Processor 1} &  \multicolumn{1}{c}{} & 
          \multicolumn{3}{c}{Processor 2} & %\multicolumn{1}{c}{} & 
          \multicolumn{3}{c}{Processor $P$} \\
          \cline{1-2} \cline{5-6} \cline{8-9}
          $\mathcal{B}({1,1})$ & $\mathcal{B}({1,2})$ & \ldots & \hspace{5pt} &
          $\mathcal{B}({2,1})$ & $\mathcal{B}({2,2})$ & \ldots & %\hspace{5pt}\ldots\hspace{5pt} &
          $\mathcal{B}({P,1})$ & $\mathcal{B}({P,2})$ & \ldots \\
          \cline{1-2} \cline{5-6} \cline{8-9}
          \multicolumn{10}{c}{} \\
          \multicolumn{10}{c}{Serial} \\
          \cline{1-10}
          $\mathcal{B}({1,1})$ & $\mathcal{B}({2,1})$ & \ldots & \multicolumn{1}{|c|}{$\mathcal{B}({P,1})$} & $\mathcal{B}({1,2})$ & $\mathcal{B}({2,2})$ & \ldots & %\multicolumn{2}{c}{} 
          & \multicolumn{1}{c}{\ldots} & \multicolumn{1}{|c|}{$\mathcal{B}(P,\frac{N}{Pb})$}\\
          \cline{1-10}
        \end{tabular}
      }
    \end{minipage}

    \caption{Illustration of distributed and serial order of blocks $\mathcal{B}({i,t})$ of length $b$ for OFL. The order within each block is maintained. Block $\mathcal{B}({i,t})$ is processed in epoch $t$ by processor $p_i$.}
    \label{fig:ofl_serialization_order}
  \end{figure}

  For the serial algorithm, we will use the following ordering of the data:
  Point $x_i$ precedes point $x_j$ if
  \begin{enumerate}
  \item $x_i$ is processed in epoch $t$ and $x_j$ is processed in epoch $t'$, and $t < t'$, or
  \item $x_i$ and $x_j$ are processed in the same epoch and $i < j$.
  \end{enumerate}
  If the data is assigned to processors as outlined above, then the serial algorithm will process the points exactly in the order induced by the indices. That means the set of points processed in any given epoch $t$ is the same for the serial and distributed algorithm.
We denote by $\clusters^{t}$ the global set of validated centers collected by OCC OFL up to (including) epoch $t$, and by $\newclusters^{i}$ the set of centers collected by the serial algorithm up to (including) point $x_i$.

  We will prove the equivalence inductively.

  \paragraph{Epoch $t=1$.} In the first epoch, all points are sent to the master. These are the first $Pb$ points. Since the master processes them in the same order as the serial algorithm, the distributed and serial algorithms are equivalent.

  \paragraph{Epoch $t>1$.} Assume that the algorithms are equivalent up to point $x_{i-1}$ in the serial order, and point $x_i$ is processed in epoch $t$. By assumption, the set $\clusters^{t-1}$ of global facilities for the distributed algorithm is the same as the set $\newclusters^{(t-1)Pb}$ collected by the serial algorithm up to point $x_{(t-1)Pb}$.
  For notational convenience, let $D(x_i,\clusters^{t}) = \min_{\mu \in \clusters^t}D(x_i,\mu)$ be the distance of $x_i$ to the closest global facility.

  The essential issue to prove is the following claim:
  \begin{claim}
    If the algorithms are equivalent up to point $x_{i-1}$, then 
    the probability of $x_i$ becoming a new facility is the same for the distributed and serial algorithm.
  \end{claim}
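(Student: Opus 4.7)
The strategy is to compute the probability that $x_i$ is accepted as a new facility in each algorithm explicitly, use the induction hypothesis to align the facility sets that appear in the two formulas, and then match the two probabilities via case analysis on how the worker distance $d$ and the master distance $d^*$ compare to $\lambda$.

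First, I would invoke the induction hypothesis to identify the facility sets that show up in the probability expressions. Since the algorithms agree up through $x_{i-1}$, the global set $\clusters^{t-1}$ that the worker used to produce $d = D(x_i, \clusters^{t-1})$ coincides with $\newclusters^{(t-1)Pb}$ in the serial execution, and the local set held by the master just before it validates $x_i$ (together with $\clusters^{t-1}$, if needed) coincides with $\newclusters^{i-1}$. Consequently the serial probability that $x_i$ becomes a facility, namely $\min(D(x_i, \newclusters^{i-1})^2/\lambda^2,\, 1)$, is expressible purely in terms of $d$ and $d^*$.

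Second, I would write the distributed probability as the product of the worker's proposal probability $\min(d^2,\lambda^2)/\lambda^2$ and the master's validation probability $\min(d^{*2},d^2)/d^2$, and compare it term-by-term with the serial probability. The key observation is that in the regime where $d \leq \lambda$ the product telescopes cleanly to $d^{*2}/\lambda^2$, which matches the serial probability because $d^* \leq d \leq \lambda$ forces the serial $\min$ to resolve to $d^{*2}/\lambda^2$. I would verify this is not a coincidence but rather the design principle behind the master's correction factor $\min(d^{*2},d^2)/d^2$: it exactly cancels the worker's use of the stale set $\clusters^{t-1}$ in place of $\newclusters^{i-1}$.

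The main obstacle is a careful case analysis when $d > \lambda$, where the worker proposes with probability $1$ and the entire burden of matching the serial probability falls on the master's validation step, and where one must further split on whether $d^*$ sits above or below $\lambda$. Once this case analysis is checked and the probabilities match, the claim follows; one then couples the distributed and serial random accept/reject decisions for $x_i$, completing the induction step and extending the equivalence of the two executions from $x_{i-1}$ to $x_i$. Iterating across all points and epochs closes the overall induction and yields the serializability statement for OFL.
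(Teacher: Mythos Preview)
Your proposal is correct and follows essentially the same argument as the paper: invoke the induction hypothesis to identify the master's current set $\clusters^{t-1}\cup\gclusters$ with the serial set $\newclusters^{i-1}$, factor the distributed acceptance probability as (worker proposal)$\times$(master validation), and check by case analysis that this product equals the serial acceptance probability $\min\{D(x_i,\newclusters^{i-1}),\lambda^2\}/\lambda^2$. The only cosmetic difference is that the paper organizes its two cases by whether $D(x_i,\newclusters^{i-1})\geq\lambda^2$ (the serial distance), which makes the telescoping a single line, whereas you split first on the worker distance $d$ versus $\lambda$ and then sub-split on $d^*$; both organizations reach the same conclusion.
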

  The serial algorithm accepts $x_i$ as a new facility with probability $\min\{1,D(x_i,\newclusters^{i-1})/\lambda^2\}$.
  The distributed algorithm sends $x_i$ to the master with probability $\min\{1,D(x_i,\clusters^{t-1})\}$. The probability of ultimate acceptance (validation) of $x_i$ as a global facility is the probability of being sent to the master \emph{and} being accepted by the master.
  In epoch $t$, the master receives a set of candidate facilities with indices between $(t-1)Pb+1$ and $tPb$.
  It processes them in the order of their indices, i.e., all candidates $x_j$ with $j < i$ are processed before $i$.
  Hence, the assumed equivalence of the algorithms up to point $x_{i-1}$ implies that, when the master processes $x_i$, the set $\clusters^{t-1} \union \gclusters$ equals the set of facilities $\newclusters^{i-1}$ of the serial algorithm.
  The master consolidates $x_i$ as a global facility with probability $1$ if $D(x_i, \newclusters^{i-1} \union \gclusters) > \lambda^2$ and with probability 
 $D(x_i, \newclusters^{i-1} \union \gclusters) / D(x_i,\clusters^{t-1})$ otherwise. 

We now distinguish two cases. If the serial algorithm accepts $x_i$ because $D(x_i,\newclusters^{i-1}) \geq \lambda^2$, then for the distributed algorithm, it holds that
 \begin{align}
   D(x_i,\clusters^{t-1}) \geq D(x_i,\clusters^{t-1} \union \gclusters) = D(x_i,\newclusters^{i-1}) \geq \lambda^2
 \end{align}
 and therefore the distributed algorithm also always accepts $x_i$.

Otherwise, if $D(x_i,\newclusters^{i-1}) < \lambda^2$, then the serial algorithm accepts with probability $D(x_i,\newclusters^{i-1})/\lambda^2$. The distributed algorithm accepts with probability
  \begin{align}
     \mathbb{P}(x_i \text{ accepted}) &= \mathbb{P}(x_i \text{ sent to master } )\cdot \mathbb{P}(x_i \text{ accepted at master})\\
     &= \frac{D(x_i,\clusters^{t-1})}{\lambda^2} \cdot \frac{D(x_i,\newclusters^{i-1} \union \gclusters)}{D(x_i,\clusters^{t-1})}\\
     &= \frac{D(x_i,\newclusters^{i-1})}{\lambda^2}.
  \end{align}
This proves the claim.

The claim implies that if the algorithms are equivalent up to point $x_{i-1}$, then they are also equivalent up to point $x_i$. This proves the theorem.
\end{proof}

\subsection{Proof of Lemma~\ref{lem:approxbd} (Approximation bound)}
\label{app:approxbd}

We begin by relating the results of facility location algorithms and DP-means.
Recall that the objective of DP-means and FL is
\begin{equation}
  \label{eq:FL_objective}
  J(\clusters) = \sum_{x \in X}\min_{\mu \in \clusters}\bregd{x}{\mu}^2 + \lambda^2 |\clusters|.
\end{equation}
In FL, the facilities may only be chosen from a pre-fixed set of centers (e.g., the set of all data points), whereas DP-means allows the centers to be arbitrary, and therefore be the empirical mean of the points in a given cluster.
  However, choosing centers from among the data points still gives a factor-2 approximation. Once we have established the corresponding clusters, shifting the means to the empirical cluster centers never hurts the objective. The following proposition has been a useful tool in analyzing clustering algorithms:
\begin{prop}\label{prop:FLtoDP}
  Let $\clusters^*$ be an optimal solution to the DP-means problem~\eqref{eq:FL_objective}, and let $\clusters^{\text{FL}}$ be an optimal solution to the corresponding FL problem, where the centers are chosen from the data points. Then
  \begin{equation*}
    J(\clusters^{\text{FL}}) \leq 2 J(\clusters^*).
  \end{equation*}
\end{prop}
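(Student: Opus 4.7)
The plan is to exhibit a feasible FL solution $\clusters'$ (i.e., one whose centers are all drawn from the data) whose cost is at most $2J(\clusters^*)$; since $\clusters^{\text{FL}}$ is optimal for FL, this immediately gives $J(\clusters^{\text{FL}}) \le J(\clusters') \le 2J(\clusters^*)$. To build $\clusters'$, I would partition the data $X$ into Voronoi cells $C_1,\dots,C_K$ induced by the optimal DP-means centers $\mu_1^*,\dots,\mu_K^* \in \clusters^*$, and then replace each $\mu_k^*$ by a judiciously chosen data point $y_k \in C_k$. Note $|\clusters'| \le |\clusters^*|$, so the facility-penalty term only shrinks; all the work is in bounding the sum of squared distances.

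The core ingredient is the standard centroid identity: for any finite set $C \subseteq \reals^d$ with centroid $\bar\mu = |C|^{-1}\sum_{x \in C}x$ and any $y \in \reals^d$,
\begin{equation*}
\sum_{x \in C}\|x-y\|^2 \;=\; \sum_{x \in C}\|x-\bar\mu\|^2 \;+\; |C|\,\|y-\bar\mu\|^2 .
\end{equation*}
Because $\mu_k^*$ minimizes $\sum_{x \in C_k}\|x-\mu\|^2$ over $\mu \in \reals^d$, we may assume $\mu_k^* = \bar\mu_k$ (otherwise replacing $\mu_k^*$ by $\bar\mu_k$ can only decrease $J(\clusters^*)$). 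Averaging the identity over $y \in C_k$ uniformly gives
\begin{equation*}
\frac{1}{|C_k|}\sum_{y \in C_k}\sum_{x \in C_k}\|x-y\|^2 \;=\; 2\sum_{x \in C_k}\|x-\mu_k^*\|^2 ,
\end{equation*}
so by an averaging argument there exists $y_k \in C_k$ with $\sum_{x \in C_k}\|x-y_k\|^2 \le 2\sum_{x \in C_k}\|x-\mu_k^*\|^2$. Define $\clusters' := \{y_1,\dots,y_K\}$, which is a feasible FL solution.

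Summing over $k$ and adding the facility penalty,
\begin{equation*}
J(\clusters') \;\le\; \sum_{k=1}^K \sum_{x \in C_k}\|x-y_k\|^2 + \lambda^2|\clusters'|
\;\le\; 2\sum_{k=1}^K \sum_{x \in C_k}\|x-\mu_k^*\|^2 + \lambda^2|\clusters^*|
\;\le\; 2J(\clusters^*),
\end{equation*}
where the last inequality uses that $\sum_{x \in C_k}\|x-\mu_k^*\|^2$ equals the DP-means assignment cost of $x \in C_k$ (by the Voronoi partition) and that $\lambda^2|\clusters^*| \le 2\lambda^2|\clusters^*|$. Optimality of $\clusters^{\text{FL}}$ for the FL problem then yields $J(\clusters^{\text{FL}}) \le J(\clusters') \le 2J(\clusters^*)$, proving the claim. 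There is no real obstacle here; the one thing to be careful about is justifying the initial reduction to $\mu_k^* = \bar\mu_k$ (which is standard because DP-means allows arbitrary centers) and keeping track of the facility penalty so that only the distance term picks up the factor of $2$.
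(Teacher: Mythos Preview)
Your proof is correct and follows essentially the same approach as the paper: both reduce to the standard $K$-means fact that restricting centers to the data points at most doubles the assignment cost, then apply it at the optimal cardinality $K^*=|\clusters^*|$ and absorb the facility penalty. The paper simply invokes this $2$-approximation as folklore, whereas you supply its proof explicitly via the centroid identity and the averaging argument.
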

\begin{proof}{\emph{(Proposition~\ref{prop:FLtoDP})}}
  It is folklore that Proposition~\ref{prop:FLtoDP}) holds for the K-means objective, i.e.,
  \begin{equation}
    \label{eq:1}
    \min_{\clusters \subseteq X, |\clusters|=k} \sum_{i=1}^n\min_{\mu \in \clusters}\|x_i - \mu\|^2 \leq 2 \min_{\clusters \subseteq X} \sum_{i=1}^n\min_{\mu \in \clusters}\|x_i - \mu\|^2.
  \end{equation}
  In particular, this holds for the optimal number $K^* = |\clusters^*|$. Hence, it holds that 
  \begin{equation}
    \label{eq:2}
    J(\clusters^{\text{FL}}) \leq \min_{\clusters \subseteq X, |\clusters|=K^*} \sum_{i=1}^n\min_{\mu \in \clusters}\|x_i - \mu\|^2 + \lambda^2 K^* \leq 2 J(\clusters^*).
  \end{equation}
\end{proof}

With this proposition at hand, all that remains is to prove an approximation factor for the FL problem.
\begin{proof}{\emph{(Lemma~\ref{lem:approxbd})}}
  First, we observe that the proof of Theorem~\ref{thm:serial} implies that, for any random order of the data, the OCC and serial algorithm process the data in exactly the same way, performing ultimately exactly the same operations. Therefore, any approximation factor that holds for the serial algorithm straightforwardly holds for the OCC algorithm too.

  Hence, it remains to prove the approximation factor of the serial algorithm. 
  Let $C^{\text{FL}}_1, \ldots, C^{\text{FL}}_k$ be the clusters in an optimal solution to the FL problem, with centers $\mu^{\text{FL}}_1, \ldots \mu^{\text{FL}}_k$. We analyze each optimal cluster individually. The proof follows along the lines of the proofs of Theorems 2.1 and 4.2 in \citep{Meyerson01}, adapting it to non-metric squared distances. We show the proof for the constant factor, the logarithmic factor follows analogously by using the ring-splitting as in \citep{Meyerson01}.
  
First, we see that the expected total cost of any point $x$ is bounded by the distance to the closest open facility $y$ that is present when $x$ arrives. If we always count in the distance of $\|x-y\|^2$ into the cost of $x$, then the expected cost is $\gamma(x) = \lambda^2 \|x-y\|^2/\lambda^2 + \|x-y\|^2 = 2\|x-y\|^2$.

We consider an arbitrary cluster $C^*_i$ and divide it into $|C^*|/2$ \emph{good} points and $|C^*|/2$ \emph{bad points}. Let $D_i = \frac{1}{|\clusters^{\text{FL}}|}\sum_{x \in C^*_i}\|x - \mu_i\|$ be the average service cost of the cluster, and let $d_g$ and $d_b$ be the service cost of the good and bad points, respectively (i.e., $D_i = (d_g + d_b)/|\CFL_i|$). The good points satisfy $\|x-\mu^{\text{FL}}_i\| \leq 2 D_i$.
Suppose the algorithm has chosen a center, say $y$, from the points $\CFL_i$. Then any other point $x \in \CFL_i$ can be served at cost at most
\begin{align}\label{eq:oflproof1}
  \|x - y\|^2 \leq \Big(\|x-\muFL_i\| + \|y - \muFL_i\|\Big)^2 \leq 2 \|x - \muFL_i\|^2 + 4D_i.
\end{align}
That means once the algorithm has established a good center within $\CFL_i$, all other good points together may be serviced within a constant factor of the total optimal service cost of $\CFL$, i.e., at $2d_g + 4(d_g + d_b)$.
The assignment cost of all the good points in $\CFL_i$ that are passed before opening a good facility is, by construction of the algorithm and expected waiting times, in expectation $\lambda^2$. Hence, in expectation, the cost of the good points in $\CFL_i$ will be bounded by $\sum_{x \text{good}}\gamma(x) \leq 2(2d_g + 4d_g + 4d_b + \lambda^2)$.

Next, we bound the expected cost of the bad points. We may assume that the bad points are injected randomly in between the good points, and bound the servicing cost of a bad point $x_b \in \CFL_i$ in terms of the closest good point $x_g \in \CFL_i$ preceding it in our data sequence. Let $y$ be the closest open facility to $\muFL_i$ when $y$ arrives. Then
\begin{align}\label{eq:oflproof2}
  \|x_b - y\|^2 \leq 2\|y - \muFL_i\|^2 + 2\|x_b - \muFL\|^2.
\end{align}
Now assume that $x_g$ was assigned to $y'$. Then
\begin{align}\label{eq:oflproof3}
  \|y - \muFL_i\|^2 \leq \|y' - \muFL_i\|^2 \leq 2\|y' - x_g\|^2 + 2\|x_g - \muFL\|^2.
\end{align}
From \eqref{eq:oflproof2} and \eqref{eq:oflproof1}, it then follows that 
\begin{align}\label{eq:oflproof4}
   \|x_b - y\|^2 &\leq 4\|y' - x_g\|^2 + 4\|x_g - \muFL\|^2 + 2\|x_b - \muFL\|^2\\
   &= 2\gamma(x_g) + 4\|x_g - \muFL\|^2 + 2\|x_b - \muFL\|^2.
\end{align}
Since the data is randomly permuted, $x_g$ could be, with equal probability, any good point, and in expectation we will average over all good points.

Finally, with probability $2/|\CFL_i|$ there is no good point before $x_g$. In that case, we will count in $x_b$ as the most costly case of opening a new facility, incurring cost $\lambda^2$.
In summary, we can bound the expected total cost of $\CFL$ by
\begin{align}
  \nonumber
  &\sum_{x \text{ good}} \gamma(x) +   \sum_{x \text{ bad}} \gamma(x)\\
  &\leq 12 d_g + 8 d_b + \lambda^2 + \frac{2\CFL}{2\CFL}\lambda^2 + 2(2\frac{2|\CFL_i|}{2|\CFL|} (12 d_g + 8 d_b + \lambda^2) + 4d_g + 2d_b)\\
  &\leq 68 d_g + 42d_b+ 4\lambda^2 \leq 68 J(\clusters^{\text{FL}}).
\end{align}
This result together with Proposition~\ref{prop:FLtoDP} proves the lemma.
\end{proof}

\section{Proof of master processing bound for DP-means (Theorem~\ref{thm:dp_rate})}
\label{ssec:appendix_dpmeans}

\begin{proof}
As in the theorem statement, we assume $P$ processors, $b$ points assigned
to each processor per epoch, and $N$ total data points. We further assume
a generative model for the cluster memberships: namely, that they are generated
iid from an arbitrary distribution $(\pi_j)_{j=1}^{\infty}$. That is, we have
$\sum_{j=1}^{\infty} \pi_{j} = 1$ and, for each $j$, $\pi_{j} \in [0,1]$. We see that
there are perhaps infinitely many latent clusters. Nonetheless, in any data set of finite
size $N$, there will of course be only finitely many clusters to which
any data point in the set belongs. Call the number of
such clusters $K_{N}$.

Consider any particular cluster indexed by $j$. At the end of the first epoch in which
a worker sees $j$, that worker (and perhaps other workers) will send some data point
from $j$ to the master. By construction, some data point from $j$ will belong to the
collection of cluster centers at the master by the end of the processing done
at the master and therefore by the beginning of the next epoch. It follows
from our assumption (all data points within a single cluster are within a $\lambda$ diameter)
that no other data point from cluster $j$ will be sent to the master in future epochs.
It follows from our assumption about the separation of clusters that no points
in other clusters will be covered by any data point from cluster $j$.

Let $S_{j}$ represent the (random) number of points from cluster $j$ sent to
the master.
Since there are $Pb$ points processed by workers in a single epoch, $N_{j}$
is constrained to take values between $0$ and $Pb$. Further, note that
there are a total of $N/(Pb)$ epochs.

Let $A_{j,s,t}$ be the event that the master is sent $s$ data points from cluster
$j$ in epoch $t$. All of the events $\{A_{j,s,t}\}$
with $s = 1,\ldots,Pb$ and $t=1,\ldots,N/(Pb)$ are disjoint. 
Define $A'_{j,0}$ to be the event that, for all epochs $t=1,\ldots,N/(Pb)$, zero
data points are sent to the master; i.e., $A'_{j,0} := \bigcup_{t} A_{j,0,t}$. Then
$A'_{j,0}$ is also disjoint from the events $\{A_{j,s,t}\}$
with $s = 1,\ldots,Pb$ and $t=1,\ldots,N/(Pb)$. Finally,
$$
	A'_{j,0} \cup \bigcup_{s=1}^{Pb} \bigcup_{t=1}^{N/(Pb)} A_{j,s,t}
$$
covers all possible data configurations.
It follows that
\begin{align*}
	\mathbb{E}[ S_{j} ]
		&= 0 * \mathbb{P}[A'_{j,0}] + \sum_{s=1}^{Pb} \sum_{t=1}^{N/(Pb)} s \mathbb{P}[A_{j,s,t}]
		= \sum_{s=1}^{Pb} \sum_{t=1}^{N/(Pb)} s \mathbb{P}[A_{j,s,t}]
\end{align*}

Note that, for $s$ points from cluster $j$ to be sent to the master at epoch $t$, it must be the case
that no points from cluster $j$ were seen by workers during epochs $1, \ldots, t-1$, and then
$s$ points were seen in epoch $t$. That is,
$
	\mathbb{P}[A_{j,s,t}]
		= (1-\pi_{j})^{Pb(t-1)} \cdot \binom{Pb}{s} \pi_{j}^{s} (1-\pi_{j})^{Pb - s}.
$

Then
\begin{align*}
	\mathbb{E}[ S_{j} ]
		&= \left( \sum_{s=1}^{Pb} s \binom{Pb}{s} \pi_{j}^{s} (1-\pi_{j})^{Pb - s} \right)
			\cdot \left( \sum_{t=1}^{N/(Pb)} (1-\pi_{j})^{Pb(t-1)} \right) \\
		&= \pi_{j} Pb \cdot \frac{1 - (1-\pi_{j})^{Pb \cdot N/(Pb)}}{1- (1-\pi_{j})^{Pb}},
\end{align*}
where the last line uses the known, respective
forms of the expectation of a binomial random variable
and of the sum of a geometric series.

To proceed, we make use of a lemma.
\begin{lem}
Let $m$ be a positive integer and $\pi \in (0,1]$. Then
$$
	\frac{1}{1-(1-\pi)^{m}} \le \frac{1}{m \pi} + 1.
$$
\end{lem}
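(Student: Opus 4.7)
The plan is to reduce the inequality by algebra to an equivalent, cleaner form and then prove that form by induction on $m$ (a short calculus argument would work equally well).

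First I would clear denominators. Since $\pi \in (0,1]$ and $m \ge 1$, both $1-(1-\pi)^m$ and $m\pi$ are strictly positive, so the inequality is equivalent to
\begin{equation*}
    m\pi \;\le\; \bigl(1 - (1-\pi)^m\bigr)(1 + m\pi),
\end{equation*}
which after expanding and canceling $m\pi$ on both sides rearranges to the compact form
\begin{equation*}
    (1-\pi)^m (1 + m\pi) \;\le\; 1.
\end{equation*}
This is the only nontrivial step of the reduction, and it is the statement I would actually prove.

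Next I would prove this equivalent inequality by induction on $m$. The base case $m=1$ gives $(1-\pi)(1+\pi) = 1-\pi^2 \le 1$. For the inductive step, assuming $(1-\pi)^m(1+m\pi) \le 1$, I would factor
\begin{equation*}
    (1-\pi)^{m+1}\bigl(1+(m+1)\pi\bigr) = (1-\pi)^m \cdot (1-\pi)\bigl(1+(m+1)\pi\bigr),
\end{equation*}
and observe that
\begin{equation*}
    (1-\pi)\bigl(1+(m+1)\pi\bigr) = 1 + m\pi - (m+1)\pi^2 \;\le\; 1 + m\pi.
\end{equation*}
Multiplying by the nonnegative quantity $(1-\pi)^m$ and invoking the inductive hypothesis completes the step. (Alternatively, differentiating $f(\pi) = (1-\pi)^m(1+m\pi)$ yields $f'(\pi) = -m(m+1)\pi(1-\pi)^{m-1} \le 0$, so $f$ is decreasing on $[0,1]$ with $f(0)=1$.)

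There is no real obstacle here; the lemma is an elementary inequality of Bernoulli type. The only thing worth being careful about is the initial rearrangement --- in particular verifying that the quantities one multiplies through by are strictly positive so that the direction of the inequality is preserved --- and then choosing the cleanest of several equivalent routes (induction, calculus, or a direct Bernoulli-style argument) to finish.
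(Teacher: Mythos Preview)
Your proposal is correct. Both your argument and the paper's reduce to the same core inequality $(1-\pi)^m(1+m\pi)\le 1$, equivalently $(1-\pi)^m \le 1/(1+m\pi)$; the paper simply cites this as an instance of Bernoulli's inequality with negative integer exponent and then rearranges, whereas you clear denominators first and prove the inequality from scratch by induction (or calculus). The difference is purely stylistic: the paper's route is marginally shorter by appealing to a named result, while yours is self-contained.
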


\begin{proof}
A particular subcase of Bernoulli's inequality tells us that, for integer $l \le 0$
and real $x \ge -1$,
we have $(1+x)^l \ge 1 + lx$. Choose $l = -m$ and $x = -\pi$. Then
\begin{align*}
	(1-\pi)^{m}
		&\le \frac{1}{1 + m\pi} \\
	\Leftrightarrow 1 - (1-\pi)^m
		&\geq 1 - \frac{1}{1 + m \pi} = \frac{m \pi}{1 + m \pi}\\
	\Leftrightarrow \frac{1}{1 - (1-\pi)^m}
		&\leq \frac{m \pi + 1}{ m \pi} = \frac{1}{m \pi} + 1.
\end{align*}
\end{proof} % end of proof of lemma

We can use the lemma to find the expected total number of data points sent
to the master:
\begin{align*}
	\mathbb{E} \sum_{j=1}^{\infty} S_{j}
		&= \sum_{j=1}^{\infty} \mathbb{E}S_{j} 
			= \sum_{j=1}^{\infty} \pi_{j} Pb \cdot \frac{1 - (1-\pi_{j})^{N}}{1- (1-\pi_{j})^{Pb}} \\
		&\le \sum_{j=1}^{\infty} \pi_{j} Pb \cdot \left(1 + \frac{1}{\pi_{j} Pb} \right)
			\cdot \left( 1 - (1-\pi_{j})^{N} \right) \\
		&= Pb \sum_{j=1}^{\infty} \pi_{j} \left( 1 - (1-\pi_{j})^{N} \right) + \sum_{j=1}^{\infty} \left( 1 - (1-\pi_{j})^{N} \right) \\
		&\le Pb + \sum_{j=1}^{\infty} \mathbb{P}(\textrm{cluster $j$ occurs in the first $N$ points}) \\
		&= Pb + \mathbb{E}[K_{N}].
\end{align*}

Conversely,
\begin{align*}
	\mathbb{E} \sum_{j=1}^{\infty} S_{j}
		&\ge \sum_{j=1}^{\infty} \pi_{j} Pb = Pb.
\end{align*}
\end{proof}

\subsection{Experiment}

To demonstrate the bound on the expected number of data points proposed but not accepted as new centers, we generated synthetic data with separable clusters.
Cluster proportions are generated using the stick-breaking procedure for the Dirichlet process, with concentration parameter $\theta=1$.
Cluster means are set at $\mu_k = (2 k,0,0,\dots,0)$, and generated data uniformly in a ball of radius $1/2$ around each center.
Thus, all data points from the same cluster are at most distance $1$ from one another, and more than distance of $1$ from any data point from a different cluster.

We follow the same experimental framework in Section \ref{ssec:evaluation_simulated}.

\begin{figure}[h]
  \centering
  \begin{subfigure}[b]{0.48\textwidth}
  	\centering
	  \includegraphics[width=200pt]{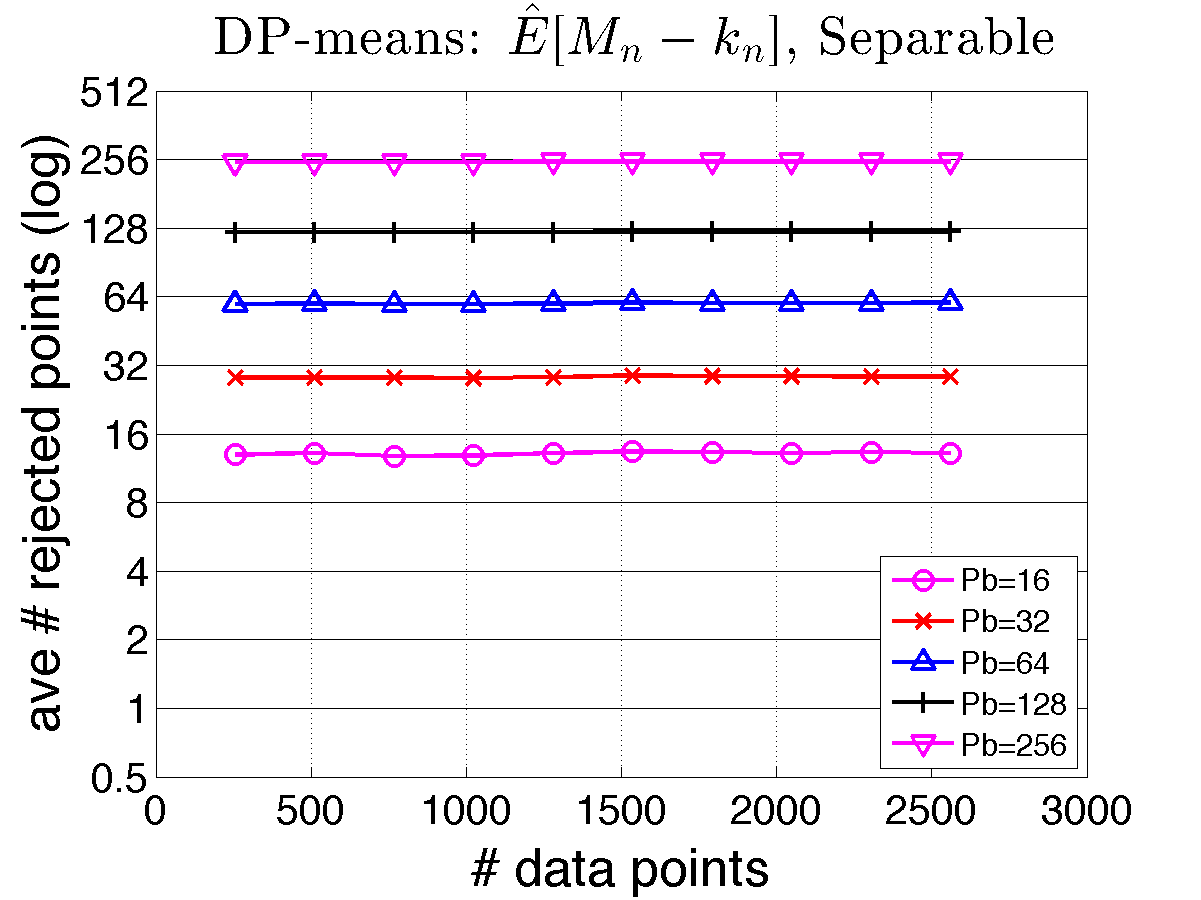}
  	\caption{\footnotesize DP-means, separable}
	  \label{fig:dpm_sim_coupon1}
	\end{subfigure}
  \begin{subfigure}[b]{0.48\textwidth}
  	\centering
	  \includegraphics[width=200pt]{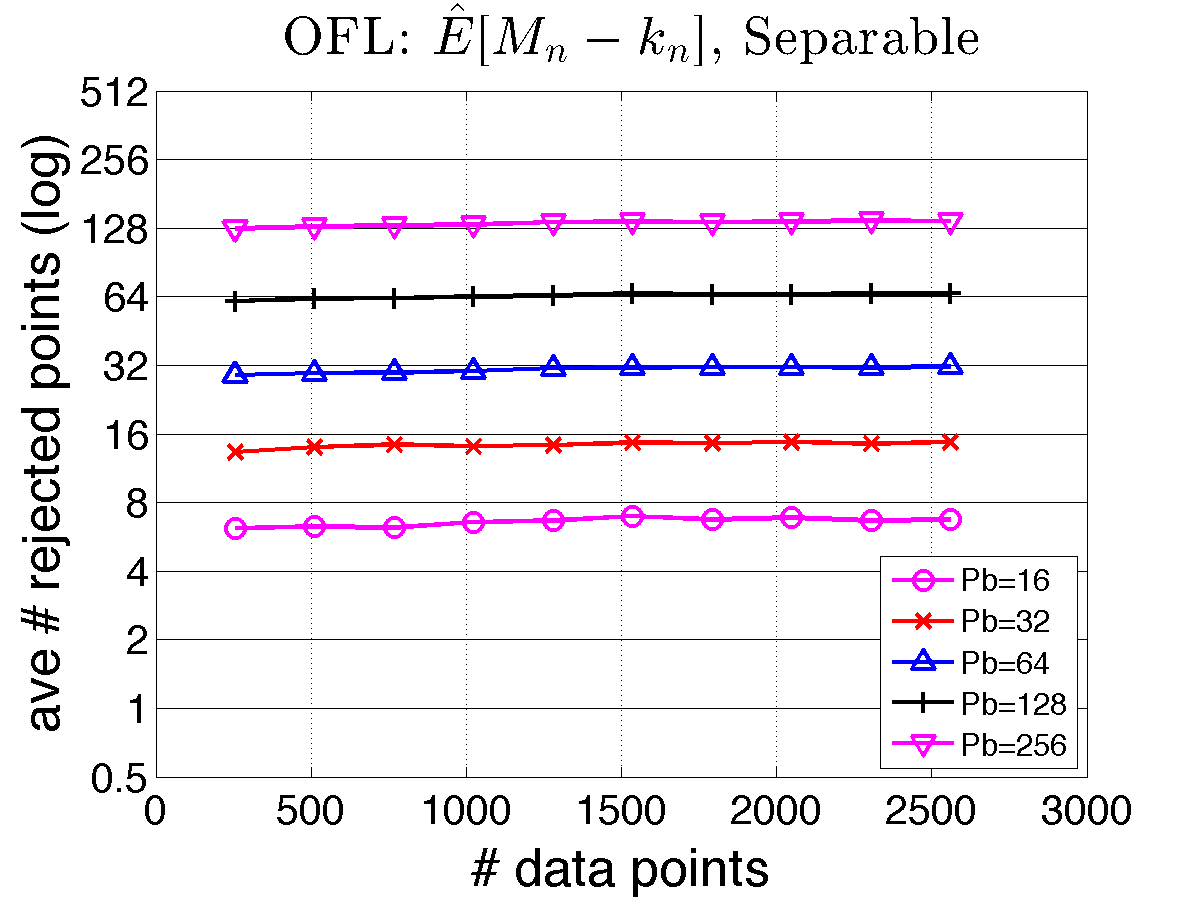}
  	\caption{\footnotesize OFL, separable}
	  \label{fig:ofl_sim_coupon1}
	\end{subfigure}
	\caption{\footnotesize Simulated distributed DP-means and OFL: expected number of data points proposed but not accepted as new clusters is independent of size of data set.}
	\label{fig:dpmofl_sim_coupon1}
\end{figure}

In the case where we have separable clusters (Figure \ref{fig:dpmofl_sim_coupon1}), $\hat{\mathbb{E}}[M_N-k_N]$ is bounded from above by $Pb$, which is in line with the above Theorem \ref{thm:dp_rate}.

\end{document}